\theoremstyle{plain}
\newtheorem{theorem}{Theorem}[section]
\newtheorem{lemma}[theorem]{Lemma}
\newtheorem{corollary}[theorem]{Corollary}
\theoremstyle{definition}
\theoremstyle{remark}
\newtheorem*{theorem*}{Theorem}
\newtheorem*{lemma*}{Lemma}
\newtheorem{corollary*}{Corollary}
\def\cS{{\mathcal{S}}}
\def\s{{\mathcal{S}}}
\def\bS{{\mathbb{S}}}
\def\cD{{\mathcal{D}}}
\def\cA{{\mathcal{A}}}
\def\cL{{\mathcal{L}}}
\def\cF{{\mathcal{F}}}
\def\cZ{{\mathcal{Z}}}
\def\hG{\hat{G}}
\def\rP{{\mathrm{P}}}
\def\EE{{\mathbb{E}}}
\def\RR{{\mathbb{R}}}
\icmltitlerunning{Constrained Ensemble Exploration for Unsupervised Skill Discovery}
\begin{document}

\twocolumn[
\icmltitle{Constrained Ensemble Exploration for Unsupervised Skill Discovery}




\begin{icmlauthorlist}
\icmlauthor{Chenjia Bai}{shlab,sz}
\icmlauthor{Rushuai Yang}{hkust}
\icmlauthor{Qiaosheng Zhang}{shlab}
\icmlauthor{Kang Xu}{tencent}
\icmlauthor{Yi Chen}{hkust}
\icmlauthor{Ting Xiao}{ecust}
\icmlauthor{Xuelong Li}{shlab,teleai}
\end{icmlauthorlist}

\icmlaffiliation{shlab}{Shanghai Artificial Intelligence Laboratory}
\icmlaffiliation{hkust}{Hong Kong University of Science and Technology}
\icmlaffiliation{tencent}{Tencent}
\icmlaffiliation{ecust}{East China University of Science and Technology}
\icmlaffiliation{teleai}{The Institute of Artificial Intelligence (TeleAI), China Telecom}
\icmlaffiliation{sz}{Shenzhen Research Institute of Northwestern Polytechnical University}

\icmlcorrespondingauthor{Ting Xiao}{xiaoting@ecust.edu.cn}

\icmlkeywords{Reinforcement Learning, Skill Discovery, Entropy, Cluster}

\vskip 0.3in
]



\printAffiliationsAndNotice{}

\begin{abstract}
Unsupervised Reinforcement Learning (RL) provides a promising paradigm for learning useful behaviors via reward-free per-training. Existing methods for unsupervised RL mainly conduct empowerment-driven skill discovery or entropy-based exploration. However, empowerment often leads to static skills, and pure exploration only maximizes the state coverage rather than learning useful behaviors. In this paper, we propose a novel unsupervised RL framework via an ensemble of skills, where each skill performs partition exploration based on the state prototypes. Thus, each skill can explore the clustered area locally, and the ensemble skills maximize the overall state coverage. We adopt state-distribution constraints for the skill occupancy and the desired cluster for learning distinguishable skills. Theoretical analysis is provided for the state entropy and the resulting skill distributions. Based on extensive experiments on several challenging tasks, we find our method learns well-explored ensemble skills and achieves superior performance in various downstream tasks compared to previous methods.
\end{abstract}

\section{Introduction}

Reinforcement Learning (RL) \cite{RLBook-2018} has demonstrated strong abilities in decision-making for various applications, including game AI \cite{Go,efficientZero}, self-driving cars \cite{driving}, robotic manipulation \cite{TD-MPC,he2024diffusion,he2024large}, and locomotion tasks \cite{quad-science-2022,shi2023robust}. However, most successes rely on well-defined reward functions based on physical prior and domain knowledge \cite{haldar2022watch}, which can be notoriously difficult to design \cite{kwon2023reward}. In contrast to RL, other research fields like language and vision have greatly benefited from unsupervised learning (i.e., without annotations or labels), such as auto-regressive pre-training for Large Language Model (LLM) \cite{NLP-1,llama} and unsupervised representation learning for images \cite{simclr,diffZero} that benefit various language and vision tasks. Motivated by this, unsupervised RL aims to learn meaningful behaviors without extrinsic rewards, where the learned behaviors can be used to solve various downstream tasks via fast adaptation for generalizable RL. 

In unsupervised RL research, previous methods often conduct empowerment-driven skill discovery to learn distinguishable skills \cite{vic,diayn}. Specifically, the agent learns skill-conditional policies by maximizing an estimation of Mutual Information (MI) between skills and trajectories, which leads to discriminating skill-conditional policies with different behaviors. However, such an MI objective often generates static skills with poor state coverage \cite{disdain}. Recent works partially address this problem via Lipschitz constraints \cite{LSD-2022,USD-2023} and random-walk guidance \cite{guidance-2023}, while they still rely on the primary MI objective. Meanwhile, estimating the MI needs variational estimators based on sampling \cite{MI-estimator}, which is challenging in high-dimensional and stochastic environments \cite{becl} and also leads to sub-optimal performance \cite{URLB}. Other methods perform pure exploration via curiosity \cite{burda2018rnd} and state entropy \cite{aps,apt,proto} in environments, while they only focus on maximizing the state coverage rather than learning meaningful behaviors for downstream tasks. 

In this paper, we take an alternative perspective for unsupervised RL and propose a novel skill discovery framework, named \underline{C}onstrained \underline{E}nsemble exploration for \underline{S}kill \underline{D}iscovery (CeSD). We adopt an ensemble of value functions to learn different skills, where each value function uses independent intrinsic rewards that encourage the agent to explore a partition of the state space based on the assigned prototype, without considering the states of other prototypes.
The prototypes are learned by feature clustering of visited states and can act as representative anchors in the state visitation space. Based on the ensemble value function, we obtain the corresponding skills via policy gradient updates. Since the skills perform entropy estimation based on non-overlapping clusters, they can perform independent exploration to expand the boundary of the assigned cluster, leading to diverse behaviors. 
To overcome the potential overlap of the state coverage of updated skills, we adopt additional constraints to the state distribution between skills and the assigned clusters, which enforce skills to visit non-overlapping states to generate more distinguishable skills. Theoretically, we show the state entropy of each skill is monotonically increasing with the distribution constraints, and the ensemble skills maximize the global state coverage via partition exploration in clusters. We conduct extensive experiments on mazes and Unsupervised Reinforcement Learning Benchmark (URLB) \cite{URLB}, showing that CeSD learns well-explored and diverse skills. 

The contribution can be summarized as follows. (\romannumeral 1)
Unlike previous empowerment-based methods, CeSD takes an alternative perspective on skill discovery that bypasses MI estimation and also learns meaningful skills assisted by entropy-based exploration. (\romannumeral 2) We propose ensemble skills that explore the environment within individual clusters and apply additional constraints to learn distinguishable skills. (\romannumeral 3) We provide theoretical analysis for the state coverage of skills. (\romannumeral 4) We obtain state-of-the-art performance in various downstream tasks from challenging DeepMind Control Suite (DMC) tasks of URLB. The open-sourced code is available at \url{https://github.com/Baichenjia/CeSD}.

\section{Preliminaries}

We consider a Markov Decision Process (MDP) with an additional skill space, defined as $(\cS, \cA, \cZ, P, r, \gamma, \rho_0)$, where $\cS$ is the state space, $\cA$ is the action space, $\cZ$ is a skill space,  $P(s'|s,a)$ is the transition function, $\gamma$ is the discount factor, and $\rho_0:\cS \rightarrow[0,1]$ is the initial state distribution.
We use a discrete skill space $\cZ$ since learning infinite skills with diverse behaviors can be difficult \cite{recurrent-2022}. When interacting with the environment, the agent takes actions $a\sim \pi(\cdot|s,z)$ by following the skill-conditional policy $\pi(a|s,z)$ with a one-hot skill vector $z\in \RR^{n}$. We use $z_i$ to denote the vector with a $1$ in the $i$-th coordinate and $0$'s elsewhere. For example, $z_3=(0,0,1,0,0)$ in $\RR^5$. We use $\pi_i(a|s)$ and $\pi(a|s,z_i)$ interchangeable to denote the policy condition on skill $z_i$. Given clear contexts, we refer to the `skill-conditional policy' as `skill' for simplification. 

In the skill-learning stage, the policy is learned by maximizing discounted cumulative reward denoted as $\sum_t {\gamma^t}r_t$, where $r_t$ is generated by some intrinsic reward function, such as empowerment or entropy-based methods. In the policy-adaptation stage, we choose a specific skill vector $z^\star$ and fine-tune the policy $\pi(a|s,z^\star)$ with the extrinsic reward for downstream tasks. In unsupervised RL, we allow the agent to perform sufficient interactions in the skill-learning stage to learn meaningful skills, while only allowing a small number of interactions in the fine-tuning stage to perform policy adaptation. Overall, unsupervised RL aims to learn skills in the first stage for fast adaptation to various tasks in the second stage. 

We denote $I(\cdot;\cdot)$ by the mutual information between two random variables, and $H(\cdot)$ by either the Shannon entropy or differential entropy depending on the context. We use uppercase letters for random variables and lowercase letters for their realizations. The empowerment objective maximizes an MI-objective $I(S;Z)$ estimation and the entropy-driven objective maximizes $H(S)$. In both objectives, $s\sim d^\pi(s)$ is the normalized probability that a policy $\pi$ encounters state $s$, defined as $d^\pi(s) \triangleq (1-\gamma)\sum_{t=0}^\infty \gamma^t P(s_t=s|\pi)$.


\section{Method}

The proposed CeSD adopts ensemble $Q$-functions for skill discovery, where each skill performs partition exploration with prototypes. We adopt constraints on state distribution for regularizing skills. We give theoretical analyses to show the advantage of our algorithm on state coverage. 

\subsection{Ensemble Skill Discovery}

Previous methods learn skill-conditional policy $\pi(a|s,z)$ by maximizing the corresponding value function $Q_z(s,a)$. 
However, since different skills share the same network parameters, optimizing one skill can affect learning other skills. According to our observations, learning a single value function can have negative effects on learning diverse skills that have significantly different behaviors. 

To address this problem, we propose to use an ensemble of value functions in CeSD. Specifically, we adopt an ensemble of $Q$-networks for different skills, defined as $\{Q_1(s,a),\ldots,Q_n(s,a)\}$. The ensemble number is the same as the number of skills. Each $Q$-network is learned by minimizing the temporal-difference (TD) error as
\begin{equation}
\label{eq:tderror}
\min_{\phi_i} \EE_{\cD_i}\big[Q_{\phi_i}(s,a)-\big(r_i(s,a)+\gamma \max_{a'}Q_{\phi_i'}(s',a')\big)\big],
\end{equation}
where $\phi_i$ is the parameter of $i$-th network, $Q_{\phi_i'}$ is the corresponding target network, $\cD_i$ is a state buffer, and $r_i$ is the intrinsic reward and will be discussed later. Since different skills have independent parameters for the $Q$-function, the different $Q$-functions can emerge diverse behaviors through optimization. In training the $Q$-networks in Eq.~\eqref{eq:tderror}, we adopt efficient parallelization for ensemble networks to minimize the run-time increase with the number of skills. 

For learning the policy, we adopt a basic skill-conditional actor that maximizes the corresponding value function in the ensemble, and the objective function is
\begin{equation}
\label{eq:actor}
\max\nolimits_\psi \EE_{a\sim \pi_\psi(\cdot|s,z_i)} \big[Q_{\phi_i}(s,a)\big],\quad {i\in[n]} 
\end{equation}
where we denote $\psi$ as the policy parameters. Since the value ensemble has already learned the knowledge of different skills, 
we find that using a single network is sufficient to model the multi-skill policy. 

Although previous works have also adopted ensemble $Q$-networks \cite{Sunrize,EDAC} for online and offline RL, they are significantly different from our method. Specifically, previous methods use the \emph{same} objective for ensemble $Q$-networks. Thus, the learned $Q$-values estimate the approximate posterior of $Q$-function in online and offline RL \cite{PBRL,hyperdqn}, essential for theoretically grounded uncertainty estimation for optimism \cite{bai-1,bai-2} or pessimism \cite{wang2024ensemble,wen2024contrastive,bai2024pessimistic,deng2023false}. In contrast, we adopt `ensemble' to represent a collection of Q-functions used for different skills. These skills are learned in the state space via partition exploration and used for downstream adaptation. The ensemble $Q$-networks in our method have \emph{different} objectives that encourage independent exploration for separate areas with intrinsic rewards, which makes the ensembles represent value functions of diverse skills that optimize the policy in different directions. 

\subsection{Partition Exploration with Prototype}

We learn state prototypes through self-supervised learning to divide the explored states into clusters. Then, each $Q$-function in the ensemble can perform independent exploration based on the entropy estimation of states in the corresponding cluster. Specifically, we learn discrete state prototypes through soft-assignment clustering, and the learned prototypes act as representative anchors in the state space. Based on the prototypes, each visited state can be assigned to a specific cluster, and each cluster corresponds to a specific value function in exploration. 

The training of prototypes is given as follows. For a specific state $s_t$, we use a neural network to map the state to a vector $u_t=f_\theta(s_t)\in \RR^{m}$. We also define $n$ continuous vectors $\{c_1,\ldots,c_n\}$ as prototypes, where $c_i\in \RR^{m}$. Then the probability of $s_t$ being assigned to the $i$-th prototype is
\begin{equation}
\label{eq:cluster}
p^{(t)}_i=\exp\big(\hat{u}_t^\top c_i/ \tau\big) / \sum\nolimits_{j=1}^n \exp\big(\hat{u}_t^\top c_j/\tau\big),
\end{equation}
where $\hat{u}_t$ is the normalized vector as $u_t/\|u_t\|_2$, and $\tau$ is the temperature factor. Similar to Eq.~\eqref{eq:cluster}, we use a fixed target network $f_{\theta^-}(\cdot)$ with the same parameters as $f_{\theta}(\cdot)$ to obtain a normalized target vector $u^-_t=f_{\theta^-}(s_t)$. Then, the target probability $q^{(t)}_i$ is obtained by running an online clustering Sinkhorn-Knopp algorithm \cite{cluster-1,cluster-2} on the normalized target vector $\hat{u}^-_t$. Then, we use the cross-entropy loss to update the prototypes as 
\begin{equation}
\cL_{\rm proto}=-\sum\nolimits_{t}\sum\nolimits_{i} q^{(t)}_i \log p^{(t)}_i.    
\end{equation}
In the unsupervised stage, the prototypes $\{c_i\}$ will update with more collected states. Nevertheless, we remark that such an update is gradual with gradient descent and will not cause drastic changes in probability $p_i^{(t)}$, which makes the cluster assignment stable for the collected states and benefits the calculation of intrinsic rewards in exploration. 

Based on the learned prototypes, each state can be assigned to a specific cluster by following $z^{(t)}\sim \mathbf{p}^{(t)}$, where $\mathbf{p}^{(t)}=[p^{(t)}_1,\ldots,p^{(t)}_n]$ represents a categorical distribution. In practice, we adopt a small temperature value in Eq.~\eqref{eq:cluster} to obtain a near-deterministic cluster assignment. We denote the set of collected states by $\bS$, and then partition $\bS$ into $n$ disjoint clusters as $\{\bS_1,\bS_2,\ldots,\bS_n\}$
according to the categorical distribution. For convenience, we slightly abuse $\bS_i$ to include the whole transition $\{(s,a,s')\}$ for each state. Based on the clusters, the skill policies can conduct partition exploration by maximizing the state entropy of the corresponding cluster. Specifically, we adopt a simple cluster-skill correspondence mechanism by assigning the cluster $\bS_i$ to the value function $Q_i$ with the same skill index. Since the state entropy of each cluster can be estimated separately, we can calculate the intrinsic rewards for each value function independently to encourage partition exploration without considering states from other clusters. For example, the value function $Q_i$ will use $\{s,a,s'\}\in \bS_i$ and the corresponding intrinsic rewards $r^{\rm cesd}_i$ calculated in $\bS_i$ for TD-learning, which encourages policy $\pi(a|s,z_i)$ to explore the state space based on $\bS_i$ without considering other clusters (i.e., $\bS_j,j\neq i$), thus leading to diverse behaviors for different skills.

\begin{figure*}[t]
\begin{center}
\centerline{
\includegraphics[width=2.0\columnwidth]{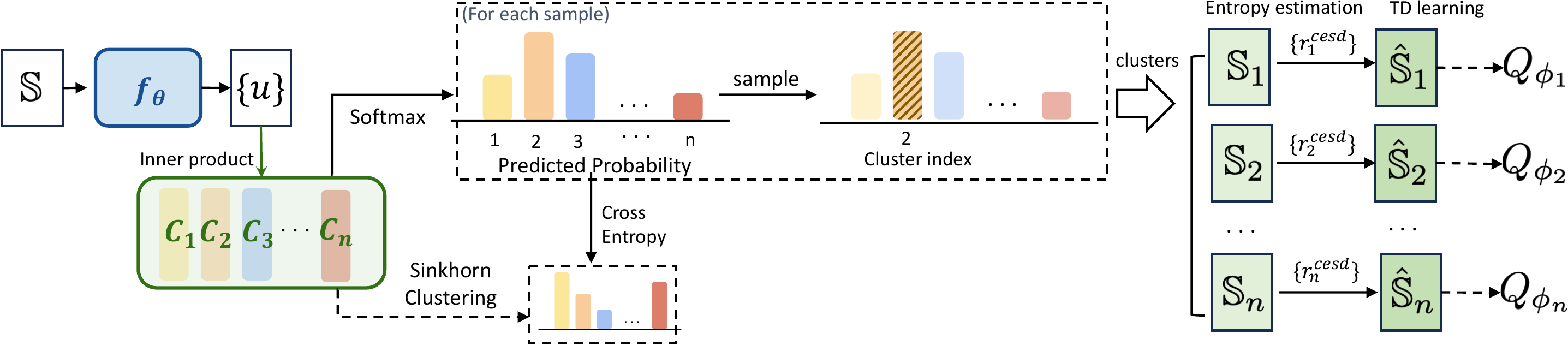}}
\vspace{-0.5em}
\caption{The partition exploration process. We adopt Sinkhorn-Knopp algorithm to learn prototypes and perform clustering for states. The intrinsic reward is calculated by entropy estimation within each cluster and then used for training a specific $Q$-network.}
\label{fig:cluster-process}
\vspace{-2em}
\end{center}
\end{figure*}

\paragraph{Particle Estimation} To calculate the entropy-based intrinsic reward $r^{\rm cesd}_i$, we adopt a popular particle-based entropy estimation algorithm in previous methods \cite{apt,cic}, and the entropy is estimated by a sum of the log distance between each particle and its $k$-th nearest neighbor. Following this method, the particle entropy estimation for $i$-th cluster $\bS_i$ is calculated as, 
\begin{equation}
H_{k}(\bS_i)\propto \sum\nolimits_{s_t\in \bS_i}\ln\big\|u_t-{\rm NN}_{k,f_\theta}(u_t)\big\|,
\end{equation}
where the distance is calculated in the feature space of states. Then the intrinsic reward for $(s_t,a_t,s_{t+1})\in\bS_i$ is set to
\begin{equation}
\label{eq:entropy-rew}
r^{\rm cesd}_i(s_t,a_t)=\big\|u_{t+1}-{\rm NN}_{k,f_\theta}(u_{t+1})\big\|.
\end{equation}
For each value function $Q_i$ in the ensemble, we perform clustering based on prototypes and obtain $\bS_i=\{(s_t,a_t,s_{t+1})\}$. We follow Eq.~\eqref{eq:entropy-rew} to calculate the intrinsic reward for each example in $\bS_i$, and obtain the reward-augmented cluster set as $\hat{\bS}_i=\{(s_t,a_t,s_{t+1},r^{\rm cesd}_i)\}$. Then, we minimize the TD-error of $Q_i$ by following Eq.~\eqref{eq:tderror} with experiences sampled from $\hat{\bS}_i$. We adopt the same training process for all clusters $i\in[n]$, which can be practically implemented via a masking technique to determine whether a transition should be used for training a specific $Q_i$ network. We illustrate the whole process of partition exploration in Figure~\ref{fig:cluster-process}.

\paragraph{Entropy Analysis} We give a simple analysis for entropy estimation based on clusters. The state entropy of partition exploration is calculated in each cluster $\bS_i$, while in global exploration is calculated in $\bS$. Given fixed state sets, we denote policies that obtain the maximum entropy in the cluster $\bS_i$ and the overall state set $\bS$ by $\pi_i^*$ and $\pi^*$, respectively. Then the following Theorem holds. 
\begin{theorem}
\label{thm:entropy}
Let each cluster have the same number of samples, for $i\in[n]$, the relationship between the maximum entropy of $\pi^*$ in the state set $\bS$ and $\pi_i^*$ in the cluster set $\bS_i$ is 
\begin{equation}
H\big(d^{\pi^*}(s)\big)=H\big(d^{\pi^*_i}(s)\big)+C(n),
\end{equation}
where $C(n)=\log n$ depends on the number of clusters $n$. 
\end{theorem}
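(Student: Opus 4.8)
The plan is to invoke the classical maximum-entropy principle: over a fixed sample space, both the Shannon entropy and the differential entropy are maximized by the uniform distribution, whose entropy equals $\log$ of the cardinality (resp.\ volume) of the support. I would apply this fact twice --- once for the global state set $\bS$ and once for a single cluster $\bS_i$ --- and then compare the two optimal values using the equal-partition hypothesis.

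First I would show that the entropy-maximizing occupancy is uniform on its support. Fixing the collected-state set $\bS$, among all state distributions supported on $\bS$ the maximizer of $H(\cdot)$ is $\mathrm{Unif}(\bS)$: in the discrete reading this is Gibbs' inequality, and in the continuous reading it follows from nonnegativity of the KL divergence from any density on $\bS$ to $\mathrm{Unif}(\bS)$. Hence, reading $d^{\pi^*}$ as the occupancy that attains the maximum entropy over $\bS$, we get $H\big(d^{\pi^*}(s)\big)=\log|\bS|$ in the discrete case and $\log \mathrm{vol}(\bS)$ in the continuous case. The same argument applied to the $i$-th cluster gives $d^{\pi^*_i}=\mathrm{Unif}(\bS_i)$ and $H\big(d^{\pi^*_i}(s)\big)=\log|\bS_i|$ (resp.\ $\log \mathrm{vol}(\bS_i)$).

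Next I would substitute the hypothesis that the $n$ clusters have equal size. In the discrete reading this means $|\bS_i|=|\bS|/n$, so $\log|\bS_i|=\log|\bS|-\log n$; in the continuous reading ``same number of samples'' is read as equal measure, $\mathrm{vol}(\bS_i)=\mathrm{vol}(\bS)/n$, with the analogous identity. Plugging into the two displays above,
\begin{equation*}
H\big(d^{\pi^*}(s)\big)=\log|\bS|=\log|\bS_i|+\log n=H\big(d^{\pi^*_i}(s)\big)+\log n ,
\end{equation*}
which is the claim with $C(n)=\log n$.

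The step I expect to be the main obstacle is not this log-cardinality bookkeeping but the justification that these maxima are actually \emph{attained} by policies in the MDP --- i.e.\ that the set of realizable discounted occupancies $\{d^\pi\}$ is rich enough to contain (or approach) $\mathrm{Unif}(\bS)$ and each $\mathrm{Unif}(\bS_i)$, since in general $d^\pi$ is constrained by $\rho_0$ and the dynamics. I would therefore prove the statement under the idealization implicit in the particle-estimation setup: given the fixed state sets, a policy can shape its state visitation toward any target distribution supported on the relevant set. A secondary point deserving one sentence is that the $k$-nearest-neighbor particle estimator is (asymptotically) consistent with the differential entropy of $\mathrm{Unif}$, which is what lets the same closed form be used in the continuous case.
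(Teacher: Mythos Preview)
Your proposal is correct and follows essentially the same route as the paper: both arguments invoke the fact that entropy over a fixed support is maximized by the uniform distribution, compute $H(d^{\pi^*})=\log|\bS|$ and $H(d^{\pi^*_i})=\log|\bS_i|=\log(|\bS|/n)$, and subtract to obtain $C(n)=\log n$. Your additional remarks on the continuous case and on realizability of the uniform occupancy go beyond what the paper's proof addresses, but the core argument is identical.
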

The assumption holds since the Sinkhorn-Knopp algorithm constrains assigning each cluster to the same number of samples. We refer to Appendix~\ref{app:theory} for a proof. Theorem~\ref{thm:entropy} shows the optimal policies $\{\pi_i^*\}$ with uniform visitation in cluster sets $\{\bS_i\}$ also obtains the maximum entropy in the global state set $\bS$. Thus, performing partition exploration in clusters also maximizes the global state coverage. Meanwhile, we can obtain diverse skills through partition exploration rather than a single exploratory policy in global exploration \cite{apt,cic}. 


\subsection{Skill Distribution Constraint}

We delve into the learning process of partition exploration and propose a state-distribution constraint for generating distinguishable skills. In Figure~\ref{fig:constraint}, we show the information diagrams of the learning process of CeSD. For a clear illustration, we only show three skills in each sub-figure, while we may adopt more skills in practice for complex tasks. 

The randomly initialized skills often have small entropy (i.e., $H(d^{\pi_i}(s))$), which signifies each skill only visits states around the start point, as shown in Figure~\ref{fig:constraint}(a). 
Considering in a tabular case, we use $\{\bS^{\rm init}_i,\bS^{\rm init}_2,\bS^{\rm init}_3\}$ to represent collected state sets for three skills and assume $\bS^{\rm init}_i\cap \bS^{\rm init}_j=\emptyset$. Specifically, we assume each skill has an independent explored area initially since the corresponding value function in the ensemble has different initialized parameters.

\begin{figure*}[t]
\begin{center}
\centerline{
\includegraphics[width=2.0\columnwidth]{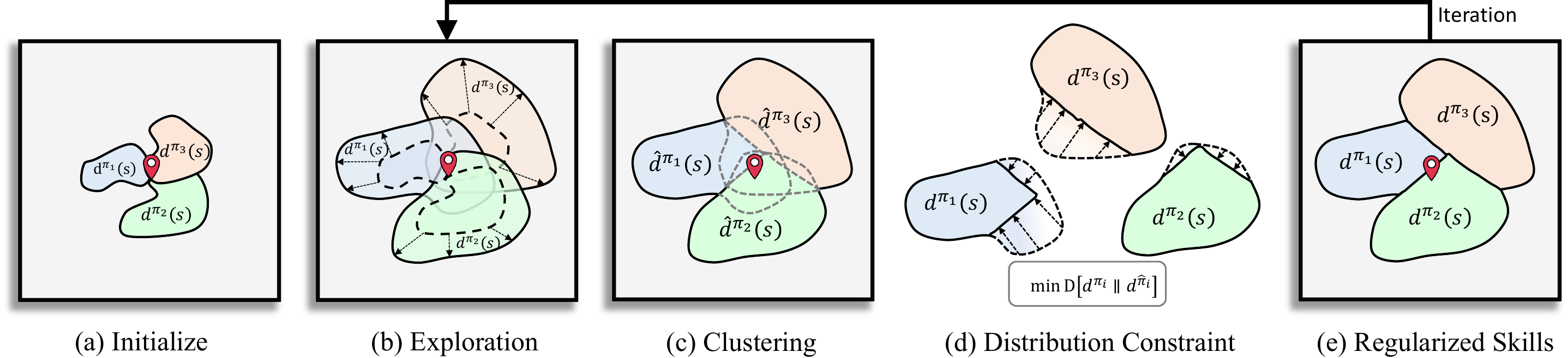}}
\vspace{-0.5em}
\caption{The learning process of CeSD. After initializing skills, we conduct entropy-based exploration for each skill and perform clustering to obtain non-overlapping clusters. Then the state distribution constraint is applied to enhance the diversity of skills. The regularized skills are used for partition exploration in the next round of iteration.}
\label{fig:constraint}
\vspace{-2em}
\end{center}
\end{figure*}

\paragraph{Problem Statement} In partition exploration, each skill uses the state entropy as intrinsic rewards defined in Eq.~\eqref{eq:entropy-rew}. Then, each skill $\pi(\cdot|s,z_i)$ will learn to (\romannumeral1) assign uniform occupancy probability for the visited states, which increases the entropy with a fixed state set. More importantly, (\romannumeral2) each skill will learn to explore the environment to collect \emph{new} states that do not occur in $\bS^{\rm init}_i$. As more states are added to $\bS^{\rm init}_i$, the corresponding state entropy $H(d^{\pi_i}(s))$ increases and the skill explores more unknown areas. As shown in Figure~\ref{fig:constraint}(b), the state coverage of each skill increased in partition exploration. We denote the new state sets after a round of \underline{p}artition \underline{e}xploration as $\{\bS_i^{\rm pe}\}$, and the overall state set is defined as $\bS^{\rm pe}=\cup_i \{\bS_i^{\rm pe}\}$. 

Nevertheless, since different skills perform independent exploration based on their previous state visitation, they may explore the same intersection area after policy update and collect the same states in the updated sets $\{\bS_i^{\rm pe}\}$, which makes $\bS_i^{\rm pe}\cap \bS_j^{\rm pe}\neq \emptyset$, where $i\neq j$. As shown in Figure~\ref{fig:constraint}(b), the visitation area of each skill overlaps with other skills, which does not hurt exploration but reduces the distinguishability of different skills. For example, in locomotion and manipulation tasks, different skills generate similar behavior if their state distributions overlap significantly.

\paragraph{State Distribution Constraint} 
To address this challenge, we propose an explicit distribution constraint for the updated state distribution to regularize skill learning. To achieve this, we first perform clustering to divide the overlapping area and assign different parts to different skills. We denote the different state set after \underline{clu}stering as $\{\bS^{\rm clu}_i\}$, then we have $\bS^{\rm clu}_i \cap \bS^{\rm clu}_j=\emptyset$ ($i\neq j$) since each state will be assigned to a unique cluster. As shown in Figure~\ref{fig:constraint}(c), different colors represent the non-overlapping state sets after clustering, and $\bS^{\rm pe} = \cup_i \{\bS^{\rm clu}_i\}$ holds since the overall state set does not change. In most cases, we have $\bS^{\rm clu}_i \subseteq \bS^{\rm pe}_i$ since 
the clustering algorithm will keep the cluster-index of existing states fixed and re-assign the newly added states to different clusters. Nevertheless, it does not always hold since the prototypes can be sub-optimal in training. 

Based on the above analyses and Figure~\ref{fig:constraint}(b), we denote the state distribution lying on the state set $\bS^{\rm pe}_i$ as $d^{\pi_i}$. However, a more desired state set is $\bS^{\rm clu}_i$ in Figure~\ref{fig:constraint}(c), which has non-overlapping states with other clusters and leads to more diverse skills. Thus, we define a desired policy $\hat{\pi}_i$ based on $\pi_i$, where
$d^{\hat{\pi}_i}(s)=0$ for $s\in\bS^{\rm pe}_i-\bS^{\rm clu}_i$ that represents the difference between two sets, and $d^{\hat{\pi}_i}(s)=d^{\pi_i}(s)/\sum_{s\in \bS^{\rm clu}_i} d^{\pi_i}(s)$ for other states by re-normalizing $d^{\pi_i}$ in the cluster states. Ideally, our constraint for regularizing the skill behavior is defined as
\begin{equation}
\cL_{\rm reg}(\pi_\theta(s,z_i))=\frac{1}{2}\sum\nolimits_{s\in\bS^{\rm pe}_i} \big|d^{\hat{\pi}_i}(s) - d^{\pi_i}(s) \big|.
\end{equation}
However, it can be computationally expensive to minimize the Total Variation (TV) distance $\cL_{\rm reg}$ via density estimation of states \cite{SMM,MADE}. Alternatively, we propose to approximately reduce such a gap by minimizing $\EE_{s\sim d^{\pi_i}}[D_{\rm TV}(\hat{\pi}_i(\cdot|s)\|\pi(\cdot|s))]$, which servers an upper bound of the density discrepancy (as shown in Lemma~\ref{lemma:distribution} below). The main difference between $\hat{\pi}_i(\cdot|s)$ and $\pi(\cdot|s)$ is that, the desired policy $\hat{\pi}_i$ has \emph{zero} visitation probability for states $s\in \bS^{\rm pe}_i-\bS^{\rm clu}_i$, while the policy $\pi(\cdot|s)$ has some probability to visit states in the intersection set of skills. Thus, we propose a heuristic intrinsic reward to prevent the current policy $\pi_i$ from visiting states in $\bS^{\rm pe}_i-\bS^{\rm clu}_i$, as 
\begin{equation}
\label{eq:reg_r}
r^{\rm reg}_i=1/\big(\big|\bS^{\rm pe}_i-\bS^{\rm clu}_i\big|+\lambda\big).
\end{equation}
This reward is maximized when $|\bS^{\rm pe}_i-\bS^{\rm clu}_i|=0$, which signifies $\pi_i$ only visits state in its assigned cluster set $\bS^{\rm clu}_i$ that has no overlap to other clusters. As shown in Figure~\ref{fig:constraint}(d), maximizing $r^{\rm reg}_i$ will force skill $\pi_i$ to reduce the visitation probability of states lied in clusters of other skills, which makes policy $\pi_i$ closer to $\hat{\pi}_i$ that only visits states in $\bS_i^{\rm clu}$. We illustrate the regularized skills in Figure~\ref{fig:constraint}(e). 

\paragraph{Qualitative Analysis}
In the next, we give a qualitative analysis of the state entropy of skills in the learning process. We start with a lemma to show that minimizing the policy divergence can also reduce the constraint term $L_{\rm reg}(\pi_i)$.

\begin{lemma}
\label{lemma:distribution}
The divergence between state distribution is bounded on the average divergence of policies $\hat{\pi}_i$ and $\pi_i$, as
\begin{equation}
D_{\rm TV}\big(d^{\hat{\pi}_i}\|d^{\pi_i}\big)
\leq \frac{\gamma}{1-\gamma}\EE_{s\sim d^{\pi_i}}\big[D_{\rm TV}(\hat{\pi}_i(\cdot|s)\|\pi_i(\cdot|s))\big],
\end{equation}
where $D_{\rm TV}(\cdot\|\cdot)$ is the total variation distance.
\end{lemma}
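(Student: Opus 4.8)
\textbf{Proof proposal for Lemma~\ref{lemma:distribution}.}

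The plan is to reduce the claim to the standard ``simulation lemma'' type estimate that controls the discrepancy of discounted occupancy measures by the per-state divergence of the policies generating them. First I would recall that for any policy $\pi$, the discounted state distribution $d^\pi$ satisfies the flow (Bellman) identity $d^\pi(s) = (1-\gamma)\rho_0(s) + \gamma \sum_{s',a} P(s\mid s',a)\,\pi(a\mid s')\,d^\pi(s')$. Writing this identity for both $\hat\pi_i$ and $\pi_i$ and subtracting, the initial-distribution terms cancel, and one is left with an expression for $d^{\hat\pi_i} - d^{\pi_i}$ in terms of (i) a transition-kernel operator applied to $d^{\hat\pi_i} - d^{\pi_i}$ itself, and (ii) a source term that involves $\hat\pi_i(\cdot\mid s) - \pi_i(\cdot\mid s)$ integrated against $d^{\hat\pi_i}$ (or, symmetrically, $d^{\pi_i}$). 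Concretely, define the stochastic matrix $P^{\pi}(s\mid s') = \sum_a P(s\mid s',a)\pi(a\mid s')$; then $d^{\hat\pi_i} - d^{\pi_i} = \gamma P^{\hat\pi_i}(d^{\hat\pi_i} - d^{\pi_i}) + \gamma\big(P^{\hat\pi_i} - P^{\pi_i}\big) d^{\pi_i}$, so solving the fixed-point equation gives $d^{\hat\pi_i} - d^{\pi_i} = \gamma (I - \gamma P^{\hat\pi_i})^{-1}\big(P^{\hat\pi_i} - P^{\pi_i}\big) d^{\pi_i}$.

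Next I would take the total-variation norm (i.e.\ one-half the $\ell_1$ norm) of both sides. The operator $(I-\gamma P^{\hat\pi_i})^{-1} = \sum_{k\ge0}\gamma^k (P^{\hat\pi_i})^k$ has $\ell_1\!\to\!\ell_1$ operator norm at most $\sum_k \gamma^k = 1/(1-\gamma)$ because each $P^{\hat\pi_i}$ is a stochastic matrix and hence a contraction (nonexpansion) in $\ell_1$ on signed measures. Pulling the factor $\gamma/(1-\gamma)$ out, it remains to bound $\|(P^{\hat\pi_i} - P^{\pi_i}) d^{\pi_i}\|_1$. Expanding, $\big((P^{\hat\pi_i} - P^{\pi_i}) d^{\pi_i}\big)(s) = \sum_{s'} d^{\pi_i}(s') \sum_a P(s\mid s',a)\big(\hat\pi_i(a\mid s') - \pi_i(a\mid s')\big)$; summing $|\cdot|$ over $s$, moving the sum inside, and using $\sum_s P(s\mid s',a) = 1$ collapses the transition kernel and yields $\sum_{s'} d^{\pi_i}(s') \sum_a |\hat\pi_i(a\mid s') - \pi_i(a\mid s')| = 2\,\EE_{s\sim d^{\pi_i}}[D_{\rm TV}(\hat\pi_i(\cdot\mid s)\|\pi_i(\cdot\mid s))]$. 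Combining with the $\tfrac12$ from the definition of $D_{\rm TV}$ on the left gives exactly the stated bound.

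The main obstacle — really a point requiring care rather than a deep difficulty — is the direction of the asymmetry: the fixed-point manipulation naturally produces the source term weighted by $d^{\pi_i}$ when one factors out $(I-\gamma P^{\hat\pi_i})^{-1}$, which is precisely the weighting that appears on the right-hand side of the lemma, so one must be careful to expand $d^{\hat\pi_i}-d^{\pi_i}$ in the order that isolates $d^{\pi_i}$ (not $d^{\hat\pi_i}$) in the source. A secondary subtlety is that $d^{\hat\pi_i}$ here is defined on the restricted support $\bS^{\rm clu}_i$ via re-normalization rather than as the genuine occupancy of some Markov policy; I would note that the flow identity I use only requires $\hat\pi_i(\cdot\mid s)$ to be a valid conditional distribution for each $s$ and $d^{\hat\pi_i}$ to be its induced occupancy, which is how $\hat\pi_i$ is constructed, so the argument goes through verbatim. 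Everything else is the routine telescoping/geometric-series bookkeeping that I would relegate to the appendix.
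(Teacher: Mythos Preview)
Your proposal is correct and follows essentially the same route as the paper. The paper derives the same key identity $d^{\hat\pi_i}-d^{\pi_i}=\gamma(I-\gamma P_{\hat\pi_i})^{-1}(P_{\hat\pi_i}-P_{\pi_i})d^{\pi_i}$ via the resolvent relation $\hat G-G=\gamma\hat G(P_{\hat\pi_i}-P_{\pi_i})G$ rather than the Bellman flow equation, and then bounds the two factors exactly as you do; your observation about the asymmetry (factoring so that $d^{\pi_i}$, not $d^{\hat\pi_i}$, appears as the weight) is precisely the care point in both arguments.
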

We refer to Appendix~\ref{app:theory} for a proof. According to Lemma~\ref{lemma:distribution}, by minimizing the policy divergence (i.e., $D_{\rm TV}(\hat{\pi}_i\|\pi_i)$) via the intrinsic reward, the difference between state distribution (i.e., $D_{\rm TV}(d^{\hat{\pi}_i}\|d^{\pi_i})$) can be bounded. Then we have the following theorem for the entropy of state distributions.
\begin{theorem}
\label{thm:distance}
Assuming the distance between state distribution is bounded by $D_{\rm TV}\big(d^{\hat{\pi}_i}\|d^{\pi_i}\big)\leq \delta$, the entropy difference between state distribution can be bounded by
\begin{equation}
\big|H(d^{\hat{\pi}_i})-H(d^{\pi_i})\big|\leq \delta \log\big(|\bS^{\rm pe}_i|-1\big)+h(\delta).
\end{equation}
where $h(x) := -x \log(x) - (1-x) \log (1-x)$ is the binary entropy function.
\end{theorem}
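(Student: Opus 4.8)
The plan is to read Theorem~\ref{thm:distance} as an instance of the sharp continuity estimate for Shannon entropy (the Fannes--Audenaert inequality) for the two discrete state distributions $p := d^{\hat\pi_i}$ and $q := d^{\pi_i}$, which in the tabular regime are both supported on the finite set $\bS^{\rm pe}_i$ of cardinality $d := |\bS^{\rm pe}_i|$. I would give a self-contained proof via a maximal-coupling argument rather than citing the inequality.

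First I would invoke the maximal coupling of $p$ and $q$: a pair of random variables $(X,Y)$ taking values in $\bS^{\rm pe}_i$ with $X\sim p$, $Y\sim q$ and $\PP(X\neq Y)=D_{\rm TV}(d^{\hat\pi_i}\|d^{\pi_i})=:\delta_0\le\delta$. Set $E:=\mathbf{1}[X\neq Y]$, a Bernoulli$(\delta_0)$ variable, so $H(E)=h(\delta_0)$. The computation then proceeds by the chain rule and one counting step. Since conditioning does not increase entropy,
\[
H(X)=H(X,E)-H(E\mid X)\le H(E)+H(X\mid E)\le H(E)+H(Y\mid E)+H(X\mid Y,E).
\]
On the event $E=0$ we have $X=Y$, so $H(X\mid Y,E=0)=0$; on the event $E=1$ we have $X\in\bS^{\rm pe}_i\setminus\{Y\}$, a set of exactly $d-1$ elements, so $H(X\mid Y,E=1)\le\log(d-1)$; hence $H(X\mid Y,E)\le\delta_0\log(d-1)$. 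Using $H(E)+H(Y\mid E)=H(Y)+H(E\mid Y)\le H(Y)+h(\delta_0)$, we obtain $H(p)-H(q)=H(X)-H(Y)\le\delta_0\log(d-1)+h(\delta_0)$. Running the identical argument with the roles of $X$ and $Y$ swapped gives the matching lower bound, whence $|H(d^{\hat\pi_i})-H(d^{\pi_i})|\le\delta_0\log(d-1)+h(\delta_0)$.

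Finally I would replace $\delta_0$ by $\delta$: the function $\phi(t):=t\log(d-1)+h(t)$ is concave on $[0,1]$ with $\phi(0)=0$ and derivative $\log\!\big((1-t)(d-1)/t\big)$, which is nonnegative on $[0,1-1/d]$; since $d\ge2$ this interval contains $[0,1/2]$, so for the small $\delta$ relevant here $\delta_0\le\delta$ gives $\phi(\delta_0)\le\phi(\delta)$, which is the claimed bound (alternatively one simply assumes $D_{\rm TV}(d^{\hat\pi_i}\|d^{\pi_i})=\delta$ without loss of generality). The step that needs the most care is the counting bound $H(X\mid Y,E=1)\le\log(d-1)$: this is exactly what yields the sharp $\log(d-1)$ factor instead of the crude $\log d$, and it relies on $p$ and $q$ living on the same $d$-point set so that once $Y$ is revealed and $X\neq Y$ is known, $X$ has only $d-1$ possible values. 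Minor points to state explicitly are the existence of the maximal coupling attaining $\PP(X\neq Y)=D_{\rm TV}$ and the above monotonicity remark (needed because $h$ is not monotone on all of $[0,1]$).
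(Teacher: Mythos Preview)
Your proposal is correct and follows essentially the same route as the paper: construct a maximal coupling $(X,Y)$ with $\PP(X\neq Y)=D_{\rm TV}(d^{\hat\pi_i}\|d^{\pi_i})$, then bound the entropy difference via Fano's inequality. The only presentational difference is that the paper first observes $H(X)-H(Y)\le H(X)-I(X;Y)=H(X\mid Y)$ and then invokes Fano's inequality as a black box, whereas you unpack the standard proof of Fano inline via the error indicator $E=\mathbf{1}[X\neq Y]$; you also explicitly address the monotonicity of $t\mapsto t\log(d-1)+h(t)$ needed to pass from $\delta_0$ to $\delta$, which the paper leaves implicit.
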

The proof follows the coupling technique \cite{coupling} and Fano's inequality \cite{Fano}, as given in Appendix~\ref{app:theory}.
\begin{corollary}
The state entropy of each $\pi_i$ is monotonically increasing with partition exploration and constraints.
\end{corollary}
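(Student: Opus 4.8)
The plan is to track the scalar sequence $\big\{H\big(d^{\pi_i^{(k)}}\big)\big\}_{k\ge 0}$ of state-occupancy entropies of skill $i$ across rounds $k$ of the alternating procedure (a partition-exploration step, followed by re-clustering and the regularization step) and to show this sequence is non-decreasing. I would split each round into its two sub-steps and bound the entropy change in each, then combine.

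First, for the partition-exploration sub-step: here $\pi_i$ is updated by policy-gradient ascent on the particle-entropy intrinsic reward of Eq.~\eqref{eq:entropy-rew} restricted to the cluster $\bS_i$. I would argue two contributions push the entropy upward. (i) For a fixed support, the $k$-NN particle reward is a monotone surrogate for $H(d^{\pi_i})$, so ascending it moves the occupancy toward the uniform distribution on the support and hence increases the Shannon entropy. (ii) The same reward drives the policy to collect genuinely new states, so the support grows, $\bS^{\rm init}_i \subseteq \bS^{\rm pe}_i$, and by Theorem~\ref{thm:entropy} (invoked under the equal-samples-per-cluster assumption) the maximal attainable entropy on the larger cluster exceeds that on the smaller one by the associated $\log$ gap. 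Putting (i) and (ii) together, after this sub-step $H\big(d^{\pi_i^{\rm pe}}\big)\ge H\big(d^{\pi_i^{\rm init}}\big)$.

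Second, for the clustering-plus-constraint sub-step: re-clustering reduces the support to $\bS^{\rm clu}_i\subseteq\bS^{\rm pe}_i$, and the heuristic reward of Eq.~\eqref{eq:reg_r} drives $\pi_i$ toward the desired policy $\hat{\pi}_i$. I would invoke Lemma~\ref{lemma:distribution} to convert the small average policy divergence achieved by this update into a bound $D_{\rm TV}\big(d^{\hat{\pi}_i}\|d^{\pi_i}\big)\le\delta$, and then apply Theorem~\ref{thm:distance} to conclude $\big|H\big(d^{\hat{\pi}_i}\big)-H\big(d^{\pi_i}\big)\big|\le \delta\log\big(|\bS^{\rm pe}_i|-1\big)+h(\delta)$; i.e.\ the regularization step perturbs the entropy by at most a controlled amount that vanishes as $\delta\to 0$. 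Combining the two sub-steps, the net change over one round is at least the exploration gain supplied by Theorem~\ref{thm:entropy} minus the residual $\delta\log\big(|\bS^{\rm pe}_i|-1\big)+h(\delta)$; as long as each round discovers enough new states that the exploration gain dominates this residual, the sequence $\big\{H\big(d^{\pi_i^{(k)}}\big)\big\}$ is monotonically non-decreasing — in fact strictly increasing while new states keep being found — and it stabilizes once $\bS^{\rm clu}_i=\bS^{\rm pe}_i$ and the cluster stops expanding.

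The main obstacle I anticipate is exactly the constraint sub-step: shrinking the support is a genuine source of entropy decrease, so monotonicity is not automatic. It really hinges on quantifying that the regularization only perturbs the occupancy by $O(\delta)$ via Theorem~\ref{thm:distance}, whereas partition exploration contributes a $\Theta\big(\log(\cdot)\big)$ increment via Theorem~\ref{thm:entropy}; making the ``exploration gain dominates the constraint-induced loss'' comparison precise — together with justifying that the $k$-NN particle estimator faithfully increases the true Shannon entropy rather than merely its estimate — is where the real care is needed. A secondary subtlety is that $\bS^{\rm clu}_i\subseteq\bS^{\rm pe}_i$ holds ``in most cases'' but not always (sub-optimal prototypes during training), so the clean version of the statement should be read modulo the idealized clustering assumption already used for Theorem~\ref{thm:entropy}.
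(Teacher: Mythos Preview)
Your route is considerably more elaborate than the paper's, and it diverges in two concrete ways.

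First, the paper does not split the round into an exploration sub-step and a constraint sub-step and balance gains against losses. Its entire argument is a one-line application of Theorem~\ref{thm:distance}: from $\big|H(d^{\hat{\pi}_i})-H(d^{\pi_i})\big|\le f(\delta)$ with $f(\delta)=\delta\log(|\bS^{\rm pe}_i|-1)+h(\delta)$, it takes the lower-bound direction $H(d^{\pi_i})\ge H(d^{\hat{\pi}_i})-f(\delta)$ and then observes that the intrinsic reward is designed to maximize $H(d^{\hat{\pi}_i})$ while $f(\delta)$ is treated as a fixed constant (for fixed $\delta$ and state set); hence $H(d^{\pi_i})$ increases. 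Theorem~\ref{thm:entropy} does not appear in the paper's proof of the corollary at all --- it is invoked only in the subsequent discussion to pass from per-cluster entropy to global coverage. Your two-phase accounting with a ``net change $\ge$ exploration gain $-$ residual'' inequality is a different decomposition that is more honest about where monotonicity could fail, but it is also doing more work than the paper asks for.

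Second, your appeal to Theorem~\ref{thm:entropy} in the exploration sub-step is a misuse. That theorem compares the maximum entropy on the full set $\bS$ with the maximum entropy on a single equal-sized cluster $\bS_i$; it does not say anything about how the maximal attainable entropy changes when a fixed cluster grows from $\bS^{\rm init}_i$ to $\bS^{\rm pe}_i$. The intuition you want (larger support $\Rightarrow$ larger maximal entropy) is just $\log|\bS^{\rm pe}_i|\ge\log|\bS^{\rm init}_i|$ and needs no theorem. Your candid discussion of the obstacles --- support shrinkage, the $k$-NN surrogate, and the ``in most cases'' caveat on $\bS^{\rm clu}_i\subseteq\bS^{\rm pe}_i$ --- is well taken and in fact identifies looseness that the paper's short proof simply elides by calling $f(\delta)$ a constant.
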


\begin{proof}
Assuming that $D_{\rm TV}\big(d^{\hat{\pi}_i} || d^{\pi_i}\big)\leq \delta$, we have $|H(d^{\hat{\pi}_i})-H(d^{\pi_i})|\leq f(\delta)$, where $f(\delta)=\delta \log(|\mathbb{S}^{\rm pe}_i|-1)+h(\delta)$ is a constant determined by the state cluster and the state distribution bound. Then we have $H(d^{\pi_i})\geq H(d^{\hat{\pi}_i}) - f(\delta)$. Corollary 4.4 holds since we maximize the state entropy (i.e., we set the intrinsic reward to $r^{\rm cesd}_i=H(d^{\hat{\pi}_i})$ in policy learning. The state entropy of the skill policy (i.e., $H(d^{\pi_i})$) also increases since $f(\delta)$ is a positive constant given a fix $\delta$ and the state set.
\end{proof}

In each iteration of CeSD, since we maximize the state entropy in each cluster (i.e., $H(d^{\hat{\pi}_i})$) via particle estimation, the state entropy of current policy (i.e., $H(d^{\pi_i})$) is also forced to be increased according to Theorem~\ref{thm:distance}. As a result, the state entropy of each skill $\pi_i$ is monotonically increasing with partition exploration and distribution constraints. Further, according to Theorem~\ref{thm:entropy}, since the maximum entropy in each cluster (i.e., $H(d^{\hat{\pi}_i})$) has a constant gap with the maximum entropy in the overall state set (i.e., $H(d^{\hat{\pi}})$), our method monotonically increases the global state coverage in exploration. 

\section{Related Works}

\textbf{Unsupervised Pretraining of RL}~
Unsupervised Pretraining methods in RL aim at obtaining prior knowledge from unlabeled data or reward-free interactions to facilitate downstream task learning~\cite{xie2022pretraining}. The methods primarily fall into three categories: Unsupervised Skill Discovery~(USD)~\cite{diayn,dads,LSD-2022,ajay2020opal,USD-2023,cic}, Data Coverage Maximization~\cite{apt,aps,proto,yarats2022don,SMM}, and Representation Learning~\cite{mazoure2021cross,yang2021representation,yuan2022pre,ghosh2023reinforcement}. Our work falls into the first category and intersects the data coverage maximization methods. To obtain skills with discriminating behaviors, existing USD research mainly relies on the MI objectives~\cite{vic,diayn}. However, the recent study~\cite{disdain} has revealed the pessimistic exploration problem of the MI paradigm. Thus, several works try to enhance the state coverage via Euclidean distance constraint~\cite{LSD-2022}, recurrent training~\cite{recurrent-2022}, discriminator disagreement~\cite{disdain}, controllability-aware objective~\cite{USD-2023}, and guidance skill~\cite{guidance-2023}. However, they still rely on MI objectives for skill discrimination ~\cite{disdain,guidance-2023} and require variational estimators based on sampling~\cite{MI-estimator}. In contrast, CeSD simultaneously enhances state coverage and skill discrimination by performing partition exploration and clustering-based skill constraint, without requiring inefficient recurrent training and state distance functions. Concerning the clustering method, a similar technique has also been utilized in prior RL pretraining works~\cite{proto,mazoure2021cross}. Motivated by different purposes, we perform clustering to guarantee distinct exploration regions of skills, instead of estimating state visitation or learning generalizable representation.

\textbf{Policy Regularization in RL}~
Policy regularization has played diverse roles in RL algorithms, including constraining the policy to close to the behavior policy in offline RL~\cite{awac,CRR,td3bc}, enhancing the exploration ability for online exploration~\cite{sac,agac,ris}, and reducing the policy distance to demonstrations~\cite{gail,t-rex,dwbc,smodice}. Formally, policy regularization regularizes the current policy to some target policy with a specific probabilistic form, or the target policy can be estimated via limited interactions~\cite{dapg} or offline datasets~\cite{smodice}.
In contrast, the target skill policy in our work does not have a specific form and is characterized by clustered states $\bS_i^{\rm clu}$ sampled from the state distribution $d^{\hat{\pi}_i}(s)$. Meanwhile, the target policy changes in each iteration with more sampled states, which makes our setting different from prior ones and particularly challenging. Thus, we propose a simple but effective reward to encourage each skill to visit fewer states lying in clusters of other skills, which regularizes the policy learning.

\textbf{Value Ensemble}~
To capture the epistemic uncertainty \cite{bai2021dynamic,qiu2022Contrastive_ucb} induced by the limited training data in RL, the value ensemble has been proposed to approximately estimate the posterior distribution of the value function in online~\cite{td3,redq,Sunrize,bai-1} and offline RL \cite{PBRL,hyperdqn,EDAC}. Recent works \cite{xu2024cross,wen2023towards} also adopt an ensemble for cross-domain policy adaptation and reuse \cite{xu2023value}. Different from these works, we utilize value ensemble to estimate the expected returns of different skills. To confront the non-stationary objective and mutual interference between skills, we adopt an independent value function for each skill.

\section{Experiments}

In this section, we compare the performance of unsupervised RL methods in challenging URLB tasks \cite{URLB}. We also conduct experiments in a maze to illustrate the learned skills in a continuous 2D space. We finally conduct visualizations and ablation studies of our method. 

\begin{figure*}[t]
\begin{center}
\centerline{
\includegraphics[width=1.9\columnwidth]{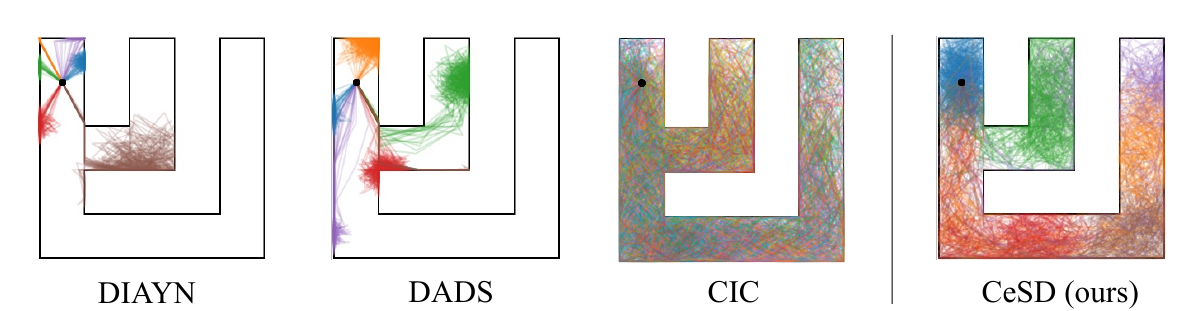}}
\vspace{-0.5em}
\caption{Visualization of skill discovery in Maze. Different colors represent the state trajectories with different skill vectors. We let the agent start moving from the black dot in the upper left corner and sample 20 trajectories for each skill for visualization.}
\label{fig:maze-result}
\vspace{-2em}
\end{center}
\end{figure*}

\subsection{Skill Learning in 2D maze}

We conduct experiments for skill discovery in a 2D maze environment from \citet{EDL-2020}. The observation of the agent is the current position $\cS\in\RR^2$, and the action $\cA\in\RR^2$ controls the velocity and direction. We consider several strong baselines for unsupervised training, including DIAYN \cite{diayn}, DADS \cite{dads}, and CIC \cite{cic}. In these methods, DIAYN and DADS perform skill discovery by maximizing the MI term of states and skills (i.e., $I(S;Z)$) via a reverse form  (i.e., $H(Z)-H(Z|S)$) and forward form (i.e., $H(S)-H(S|Z)$), respectively, to construct a variational estimation of the MI objective. CIC is a data-based method that maximizes the state entropy estimation (i.e., $H(S)$) for pure exploration. For a fair comparison, we use a 10-dimensional one-hot vector for skills in all methods. 

In Figure~\ref{fig:maze-result}, we visualize the learned skills by sampling trajectories from the skill-conditional policies of CeSD and other baselines. (\romannumeral1) Concerning the discriminability of skills, we find empowerment-based methods like DIAYN and DADS learn distinguishable skills, where each skill can generate trajectories that are different from those of other skills. In contrast, entropy-driven methods like CIC cannot generate discriminable skills due to the lack of mechanisms to distinguish different skills. (\romannumeral2) Concerning state coverage, both DIAYN and DADS have limited state coverage since they rely on the MI objective without encouraging exploration. The entropy-based CIC algorithm obtains the best state coverage since the entropy maximization encourages exploration of the environment and also leads to a uniform state visitation within the whole state space. (\romannumeral3) The proposed CeSD performs the best in both skill discriminability and state coverage. CeSD takes the theoretical advantage of monotonic entropy increasing (as Theorem~\ref{thm:distance}), and the ensemble skills obtain well global coverage. Meanwhile, CeSD learns distinguishable skills with approximately non-overlapping coverage via state distribution constraints.

\subsection{URLB Benchmark Results}

We evaluate CeSD in the URLB benchmark \cite{URLB}. \emph{Walker} domain contains biped locomotion tasks with $\cS\in\RR^{24}$ and $\cA\in\RR^{6}$; \emph{Quadruped} domain contains quadruped locomotion tasks with high-dimensional state and action space as   $\cS\in\RR^{78}$ and $\cA\in\RR^{16}$, which have much larger space in exploration and is more challenging; and \emph{Jaco Arm} domain contains a 6-DoF robotic arm and a three-finger gripper with $\cS\in\RR^{55}$ and $\cA\in\RR^{9}$. In experiments, each method performs unsupervised skill learning with intrinsic rewards and adapts the skills to downstream tasks with extrinsic rewards. There are 3 downstream tasks for each domain, including \emph{Stand, Walk, Run,} and \emph{Flip} tasks for \emph{Walker} domain, \emph{Stand, Walk, Run}, and \emph{Jump} tasks for \emph{Quadruped} domain, and move the objective to \emph{Bottom Left, Bottom Right, Top Left,} and \emph{Top Right} for \emph{Jaco Arm}.

\begin{figure*}[t]
\begin{center}
\centerline{
\includegraphics[width=2.1\columnwidth]{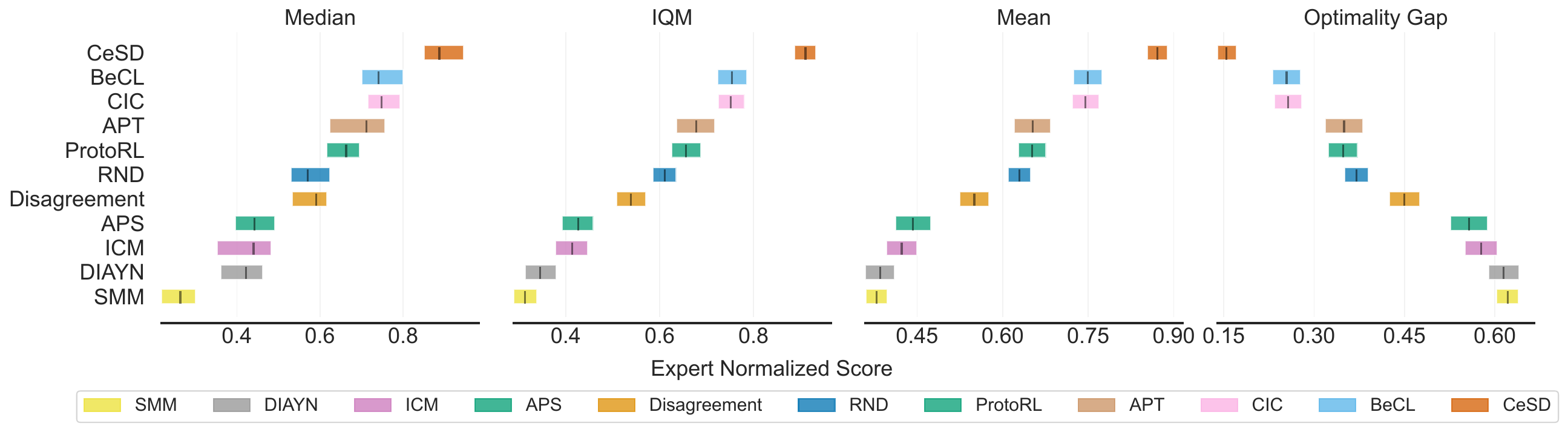}}
\vspace{-0.5em}
\caption{Comparison of performance in 12 downstream tasks of URLB benchmark. We report the aggregate statistics of 10 seeds by following \citet{agarwal2021IQM} after finetuning. CeSD achieves the new state-of-the-art results in the URLB benchmark.}
\label{fig:dmc-result}
\vspace{-1.5em}
\end{center}
\end{figure*}

\begin{figure*}[t]
\begin{center}
\centerline{
\includegraphics[width=2.0\columnwidth]{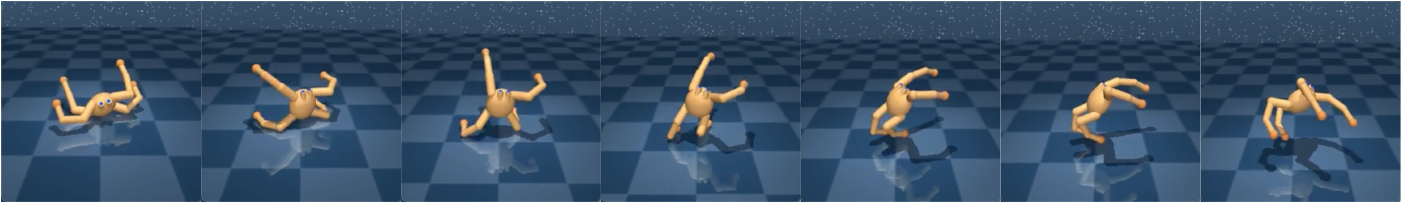}}
\vspace{-0.5em}
\caption{An illustration of the rolling skill learned in \emph{Quadruped}.}
\label{fig:vis-skill-main}
\vspace{-1.5em}
\end{center}
\end{figure*}

We compare CeSD to several strong baselines. Specifically, we compare CeSD to (\romannumeral1) the skill discovery methods, including DIAYN \cite{diayn}, SMM \cite{SMM}, and APS \cite{aps}; (\romannumeral2) the entropy-based exploration methods, including APT \cite{apt}, ProtoRL \cite{proto}, and CIC \cite{cic}; and (\romannumeral3) curiosity-driven exploration methods, including ICM \cite{pathak2017curiosity}, RND \cite{burda2018rnd}, and Disagreement \cite{pathak2019disagreement}; (\romannumeral4) the recently proposed BeCL \cite{becl} algorithm that performs contrastive skill discovery. Most implementations of baselines follow URLB \cite{URLB} and the official code of baselines. We refer to Appendix~\ref{app:urlb} for the hyper-parameters and implementation details.

We do not include DISCO-DANCE \cite{guidance-2023} as a baseline since it does not open-source the code. Meanwhile, Dyna-MPC \cite{rajeswar2023mastering} Dyna-MPC is a model-based finetuning method with extrinsic rewards, while our method focuses on unsupervised pertaining. Choreographer \cite{choreographer} is learned in an offline dataset collected by exploration algorithms, while CeSD and baselines are all learned from scratch via exploring the environment. Thus, we do not use Dyna-MPC and Choreographer as baselines. The recently proposed methods like LSD \cite{LSD-2022}, CSD \cite{USD-2023}, and Metra \cite{metra} are evaluated on different benchmarks other than URLB. We tried to re-implement these methods in URLB tasks based on the official code, and the results are given in Appendix~\ref{app:add-compare}.

In the unsupervised training stage, each method is trained for 2M steps with its intrinsic reward. Then we randomly sample a skill as the policy condition and fine-tune the policy for 100K steps in each downstream task for fast adaptation. Rather than choosing the best skill in the fine-tuning stage, we comprehensively evaluate the generalizability of all skills for adaptation in downstream tasks. We run 10 random seeds for each baseline, which results in 11 algorithms $\times$ 10 seeds $\times$ 12 tasks = 1320 runs. We follow reliable \cite{agarwal2021IQM} to evaluate the aggregated statistics, including mean, median, interquartile mean (IQM), and optimality gap (OG) with the 95\% bootstrap confidence interval. The expert score in calculating the metrics is adopted from \citet{URLB}, which is obtained by an expert DDPG agent. According to the results in Figure~\ref{fig:dmc-result}, our CeSD algorithm achieves the best results in the URLB benchmark. Compared to the entropy-based baselines, our method outperforms CIC (with 75.18\% IQM) and achieves 91.05\% IQM. The results show that partition exploration and distribution constraints in CeSD benefit skill learning and lead to more efficient generalization in downstream tasks compared to the global exploration performed in CIC. Compared to the expert score, CeSD achieves the best OG result with 15.47\%, significantly outperforming the previous state-of-the-art BeCL algorithm (with 25.44\% OG). We also report the evaluation scores of all methods in Appendix~\ref{app:dmc-score}.


\begin{figure}[t]
\begin{center}
\centerline{
\includegraphics[width=1.0\columnwidth]{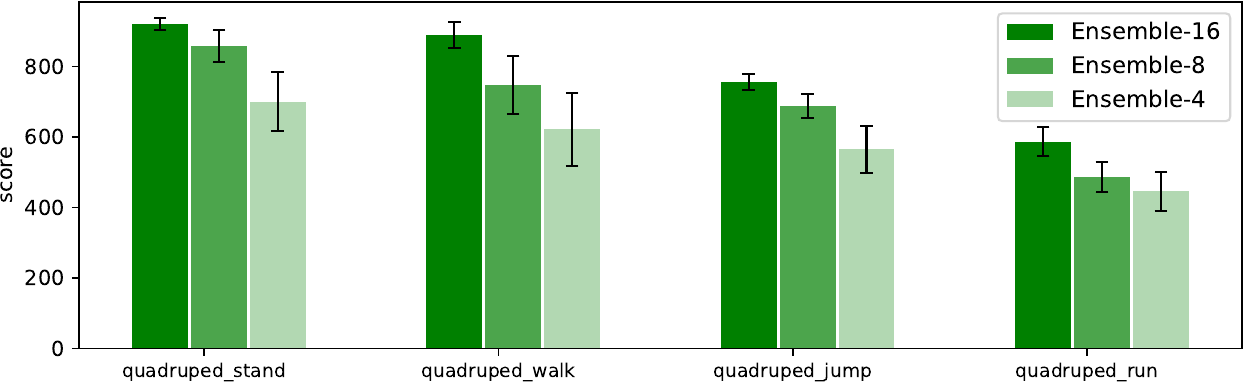}}
\vspace{-0.5em}
\caption{An ablation study of the ensemble skills in \emph{Quadruped}.}
\label{fig:ablation-E}
\vspace{-2em}
\end{center}
\end{figure}

\subsection{Visualization of Skills}

In URLB, we visualize the behaviors of skills learned in the unsupervised training stage. We find many interesting locomotion and manipulation domain skills emerging in the pretraining stage with our method. Specifically, CeSD can learn various locomotion skills, including standing, walking, rolling, moving, somersault, and jumping in \emph{Walker} and \emph{Quadruped}. In \emph{Jaco}, the agent learns various manipulation skills, including moving the arm to explore different areas and controlling the gripper to grasp objects in different locations. An example of rolling skills in the \emph{Quadruped} domain is shown in Figure~\ref{fig:vis-skill-main}. We provide more visualizations of skills in DMC domains in Appendix~\ref{app:visualize}. 

Through partition exploration and distribution constraint, our method learns dynamic and non-trivial behavior during the unsupervised training stage. In contrast, previous skill-discovery methods usually learn distinguishable posing or yoga-style skills but are often static and lack exploration ability, which has been visualized in previous works \cite{cic,becl}. Since our method can learn meaningful behaviors during the unsupervised stage, it obtains superior generalization performance in the fine-tuning stage in various downstream tasks, as shown in Figure~\ref{fig:dmc-result}.

\subsection{Ablation Study} 

In this section, we provide ablation studies on the ensemble number of skills and state distribution constraints in CeSD. 

\begin{figure*}[t]
\begin{center}
\vspace{-0.5em}
\centerline{
\includegraphics[width=1.8\columnwidth]{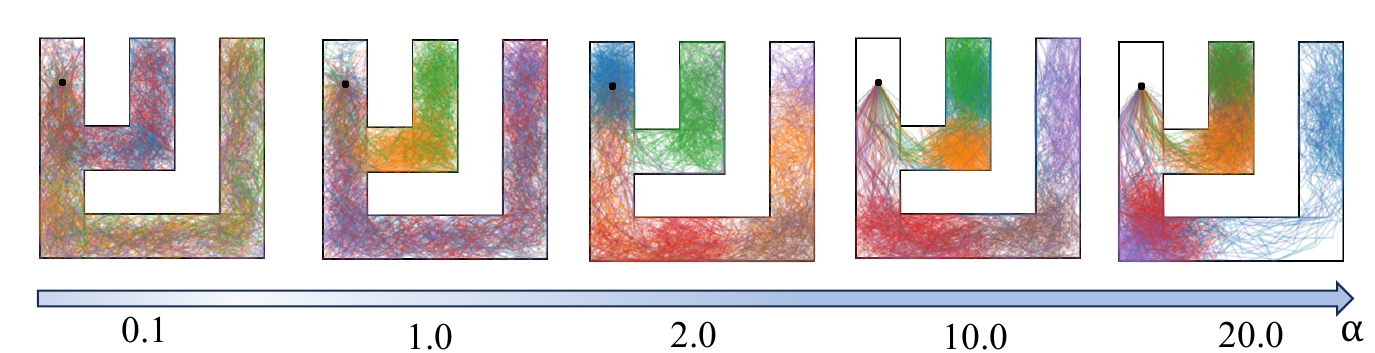}}
\vspace{-0.5em}
\caption{An ablation study of different factors of the regularization reward for distribution constraint in the maze task.}
\label{fig:maze_effect}
\vspace{-2em}
\end{center}
\end{figure*}

\paragraph{Ensemble Value Function} We conduct an ablation study of ensemble value functions in the \emph{Quadruped} domain. We reduce the ensemble number of value functions while keeping the skill number unchanged, which makes a single $Q$-network responsible for the learning of several skills. As shown in Figure~\ref{fig:ablation-E}, as we reduce the ensemble number, the generalization performance of skills also decreases. Since the different skills are enforced to explore independent space, reducing the number of skills will enlarge the exploration space of each value function, reducing the uniqueness of each skill. As an extreme case, reducing the ensemble size to $1$ resembles CIC \cite{cic} that only performs exploration without learning distinguishable skills.


\paragraph{Distribution Constraints} The final intrinsic reward of CeSD is $r^{\rm cesd}_i(s,a) + \alpha \cdot r^{\rm reg}_i(s,a)$, where $r^{\rm reg}_i(s,a)$  performs distribution constraint for different skills. We conduct an ablation study of distribution constraints using different values for $\alpha$, as shown in Figure~\ref{fig:maze_effect}. (\romannumeral1) When $\alpha$ is very small (e.g., $\alpha=0.1$), the trajectories of different skills are mixed and CeSD is very similar to CIC. Nevertheless, since we adopt an ensemble network and partition exploration for different skills, the state coverage of skills also has slight differences. (\romannumeral2) As we increase $\alpha$ (e.g., $\alpha\in[1.0,2.0]$), the regularized reward will force each skill to reduce the visitation probability of states lying in clusters of other skills, which makes the different skills more distinguishable. (\romannumeral3) When the value of $\alpha$ becomes extremely large (e.g., $\alpha\geq 10.0$), the distribution constraints will dominate the reward function, which may hinder the exploration of skills. 


\section{Conclusion}

We have introduced CeSD, a novel skill discovery method assisted by partition exploration and state distribution constraint. We perform self-supervised clustering for collected states and constrain the exploration of each skill based on the assigned cluster, which leads to diverse skills with strong exploration abilities. Extensive experiments in the maze and URLB benchmark show that CeSD can explore complex environments and obtain state-of-the-art performance in adaptation to various downstream tasks. The main limitation of our method is that the ensemble value functions cannot be generalized to the continuous skill space. A future direction is to adopt a randomized value function \cite{azizzadenesheli2018efficient} or hyper $Q$-network \cite{hyperdqn} for implicit ensembles with an infinite number of networks.

\section*{Acknowledgements}

This work was supported by the National Natural Science Foundation of China (No.62306242).

\section*{Impact Statement}
This paper presents work whose goal is to advance the field of Machine Learning. There are many potential societal consequences of our work, none of which we feel must be specifically highlighted here.

\nocite{langley00}

\bibliography{main}
\bibliographystyle{icml2024}

\newpage
\appendix
\onecolumn
\section{Theoretical Analysis}
\label{app:theory}

\subsection{Proof of Theorem \ref{thm:entropy}}
\begin{theorem*}[Restate of Theorem \ref{thm:entropy}]
Let each cluster have the same number of samples, for $i\in[n]$, the relationship between the maximum entropy of $\pi^*$ in the state set $\bS$ and $\pi_i^*$ in the cluster set $\bS_i$ is 
\begin{equation}
H\big(d^{\pi^*}(s)\big)=H\big(d^{\pi^*_i}(s)\big)+C(n),
\end{equation}
where $C(n)=\log n$ depends on the number of clusters $n$.  
\end{theorem*}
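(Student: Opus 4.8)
The plan is to reduce the statement to the elementary fact that, among all probability distributions supported on a finite set of cardinality $N$, the maximum Shannon entropy equals $\log N$ and is attained only by the uniform distribution. First I would establish this lemma: for any $p$ supported on a set of size $N$, concavity of $\log$ (equivalently, non-negativity of the KL divergence from $p$ to the uniform distribution) gives $H(p)=\sum_x p(x)\log\frac{1}{p(x)}\le \log\sum_x p(x)\cdot\frac{1}{p(x)}=\log N$, with equality iff $p$ is uniform. Applying this with the support taken to be the fixed state set tells us the entropy-maximizing state distribution over $\bS$ is uniform on $\bS$, so $H\big(d^{\pi^*}(s)\big)=\log|\bS|$; likewise the entropy-maximizer over the cluster $\bS_i$ is uniform on $\bS_i$, so $H\big(d^{\pi_i^*}(s)\big)=\log|\bS_i|$.

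Next I would invoke the equal-cluster-size hypothesis. Since $\bS=\bigsqcup_{i=1}^{n}\bS_i$ is a disjoint partition with $|\bS_1|=\cdots=|\bS_n|$, we have $|\bS_i|=|\bS|/n$ for every $i\in[n]$, where $|\bS|/n$ is an integer precisely because of this assumption. Substituting into the two entropy values above,
\begin{equation}
H\big(d^{\pi^*}(s)\big)=\log|\bS|=\log|\bS_i|+\log n=H\big(d^{\pi_i^*}(s)\big)+\log n,
\end{equation}
which is exactly the claim with $C(n)=\log n$.

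The only point that warrants a remark rather than a calculation is the realizability of the uniform state-visitation distribution: to read $d^{\pi^*}$ and $d^{\pi_i^*}$ as induced by genuine policies one must assume the MDP admits policies whose visitation distributions are (approximately) uniform over the relevant supports — which is the standard idealization underlying particle-entropy skill discovery, and why the theorem is phrased as ``maximum entropy over a fixed state set.'' Without that assumption one still obtains, by the same chain using only the upper bound $H(p)\le\log(\text{support size})$ together with $|\bS_i|=|\bS|/n$, the one-sided relation $H\big(d^{\pi^*}(s)\big)\le H\big(d^{\pi_i^*}(s)\big)+\log n$ whenever the cluster-level optimizer is itself (near-)uniform; I would state this as a remark. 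No other step poses any obstacle, since everything reduces to concavity of $\log$ and counting.
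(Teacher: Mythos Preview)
Your proposal is correct and follows essentially the same approach as the paper: both compute the maximum entropy as $\log$ of the support size (uniform visitation), invoke the equal-cluster hypothesis to get $|\bS_i|=|\bS|/n$, and subtract to obtain the $\log n$ gap. Your presentation is somewhat more careful---you justify the max-entropy lemma via concavity/KL and flag the realizability assumption---but these are refinements rather than a different route.
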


\begin{proof}
According to the assumption, each $\bS_i$ should have the same number of samples as $|\bS_i|=\frac{N}{n}$, where we denote the total samples as $|\bS|=N$. For a set of states, the entropy obtains its maximum when the policy uniformly visits each state, as $d^{\pi^\star_i}(s)=\frac{1}{|\bS_i|}=\frac{n}{N}$ in partition exploration, and $d^{\pi^\star}(s)=\frac{1}{|\bS|}=\frac{1}{N}$ in global exploration. 

For policy $\pi^\star_i$ in partition exploration, the corresponding entropy of state distribution is
\begin{equation}
H(d^{\pi^\star_i}(s))
=-\sum_{s\in \bS_i} d^{\pi^\star_i}(s)\log d^{\pi^\star_i}(s)
=-\sum_{s\in \bS_i} \frac{n}{N}\log \frac{n}{N}
=\log N - \log n.
\end{equation}
Similarly, for policy $\pi^\star$ in global exploration, the corresponding entropy of state distribution is
\begin{equation}
H(d^{\pi^\star})
=-\sum_{s\in \bS} d^{\pi^\star}(s)\log d^{\pi^\star}(s)
=-\sum_{s\in \bS} \frac{1}{N}\log \frac{1}{N}
=\log N.
\end{equation}
Then we have the following relationship as
\begin{equation}
H\big(d^{\pi^*}(s)\big)=H\big(d^{\pi^*_i}(s)\big)+C(n),
\end{equation}
where $C(n)=-\log n$ that depends on the number of skills. 
\end{proof}

The primary purpose of Theorem~\ref{thm:entropy} is to relate the entropy of the global optimal policy $\pi^*$ and local optimal policies $\{\pi_i^*\}_{i\in[n]}$, thus showing that maximizing the entropy of each local policy will effectively maximize the entropy of the global policy. In many scenarios where uniform distributions over the state space $\mathbb{S}$ and subsets of state space $\mathbb{S}_i$ are realizable, the maximum entropy policies $\pi^*$ and $\pi^*_i$ are exactly equal to these uniform distributions. Thus we have the relation $H(d^{\pi^*}(s)) = H(d^{\pi^*_i}(s)) + \log n$ for any $i \in [n]$ (as stated in Theorem 3.1). We highlight this fact because it provides a simple yet clear insight into why performing partition exploration in clusters also maximizes global state coverage.

Meanwhile, there are scenarios where uniform distributions are not realizable. In this case, let $d(s)$ be a distribution over $\bS$ that is composed by the local state distributions $\{d^{\pi_i^*}(s)\}_{i\in[n]}$, such that 
$$d(s) = \alpha_i \cdot d^{\pi_i^*}(s) \ \text{ if and only if } \ s \in \bS_i,$$ where $\sum_{i \in [n]} \alpha_i = 1$ and each $\alpha_i$ represents the probability that a state belongs to $\bS_i$. Then, we have 
\begin{align}
    H(d(s)) = \left(\sum_{i=1}^n \alpha_i \cdot H(d^{\pi_i^*}(s)) \right) + H(\{\alpha_1, \alpha_2, \ldots, \alpha_n\}), \label{eq:8}
\end{align}
where $H(\{\alpha_1, \alpha_2, \ldots, \alpha_n\})$ is the entropy of the probability vector $(\alpha_1, \alpha_2, \ldots, \alpha_n)$.
Thus, increasing/maximizing each local entropy $H(d^{\pi_i^*}(s))$ will lead to an increase of the global entropy of the distribution $d(s)$. Eqn.~\eqref{eq:8} is also consistent with our current Theorem 3.1, where $H(d^{\pi_i^*}(s)) = \log N - \log n$ is the entropy of the uniform distribution over $\bS_i$, $H(d(s)) = \log N$ is the entropy of the uniform distribution over $\bS$, and $\alpha_i = 1/n$ for all $i\in[n]$ (i.e., $H(\{\alpha_1, \alpha_2, \ldots, \alpha_n\}) = \log n)$.

Although we do not know the maximum entropy policy $\pi^*$, but it  must satisfy 
\begin{align}
H(d^{\pi^*}(s)) \ge \max_{\alpha_i \in [0,1], \ \sum_{i=1}^n \alpha_i = 1} \left(\sum_{i=1}^n \alpha_i \cdot H(d^{\pi_i^*}(s)) \right) + H(\alpha_1, \alpha_2, \ldots, \alpha_n).
\end{align}

\textbf{Proof of~\eqref{eq:8}}: 
\begin{align}    
H(d(s)) &= -\sum_{s\in \bS}d(s) \log d(s) \\
&= - \sum_{i=1}^n \sum_{s \in \bS_i}d(s) \log d(s) \\
&= - \sum_{i=1}^n \sum_{s \in \bS_i} \alpha_i d^{\pi_i^*}(s) \cdot \log (\alpha_i d^{\pi_i^*}(s)) \\
&= - \sum_{i=1}^n \sum_{s \in \bS_i} \alpha_i d^{\pi_i^*}(s) \cdot [\log ( d^{\pi_i^*}(s)) + \log (\alpha_i)] \\
&= \left(\sum_{i=1}^n -\alpha_i \sum_{s \in \bS_i} d^{\pi_i^*}(s) \log d^{\pi_i^*}(s) \right) + \left(\sum_{i=1}^n -\alpha_i \sum_{s \in \bS_i} d^{\pi_i^*}(s) \log(\alpha_i) \right) \\
&= \sum_{i=1}^n \alpha_i H(d^{\pi_i^*}(s)) + \sum_{i=1}^n -\alpha_i \log (\alpha_i) \\
&= \sum_{i=1}^n \alpha_i H(d^{\pi_i^*}(s)) + H(\{\alpha_1, \alpha_2, \ldots, \alpha_n \}).
\end{align}

This inequality shows that maximizing the entropy of each local policy $H(d^{\pi_i^*}(s))$ will maximize the lower bound on $H(d^{\pi^*}(s))$, which effectively maximizes global state coverage.

\subsection{Proof of Lemma \ref{lemma:distribution}}

\begin{lemma*}[Restate of Lemma \ref{lemma:distribution}]
The divergence between state distribution is bounded on the average divergence of policies $\hat{\pi}_i$ and $\pi_i$, as
\begin{equation}
D_{\rm TV}\big(d^{\hat{\pi}_i}\|d^{\pi_i}\big)
\leq \frac{\gamma}{1-\gamma}\EE_{s\sim d^{\pi_i}}\big[D_{\rm TV}(\hat{\pi}_i(\cdot|s)\|\pi_i(\cdot|s))\big],
\end{equation}
where $D_{\rm TV}(\cdot\|\cdot)$ is the total variation distance.
\end{lemma*}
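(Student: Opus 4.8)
The plan is to use the standard distribution-shift decomposition relating the discounted state-occupancy measure to the underlying policy. Recall that for any policy $\pi$, the occupancy $d^\pi(s) = (1-\gamma)\sum_{t\geq 0}\gamma^t P(s_t = s\mid \pi)$ can be written in terms of the state-action occupancy and the transition kernel. I would start from the Bellman-flow characterization: $d^\pi$ is the unique fixed point of $d(s') = (1-\gamma)\rho_0(s') + \gamma \sum_{s,a} d(s)\pi(a\mid s) P(s'\mid s,a)$. Writing this identity for both $\hat\pi_i$ and $\pi_i$, subtracting, and noting that the initial-distribution terms cancel (both policies share $\rho_0$), I obtain an expression for $d^{\hat\pi_i} - d^{\pi_i}$ in which the discrepancy is driven only by the difference $\hat\pi_i(\cdot\mid s) - \pi_i(\cdot\mid s)$, propagated through the transition operator.

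The key steps, in order: (1) Let $P^\pi$ denote the state-to-state transition operator induced by $\pi$, i.e. $(P^\pi d)(s') = \sum_{s,a} d(s)\pi(a\mid s)P(s'\mid s,a)$. From the two flow equations, derive $d^{\hat\pi_i} - d^{\pi_i} = \gamma (P^{\hat\pi_i} d^{\hat\pi_i} - P^{\pi_i} d^{\pi_i})$, then add and subtract $\gamma P^{\pi_i} d^{\hat\pi_i}$ to split this into $\gamma P^{\pi_i}(d^{\hat\pi_i} - d^{\pi_i}) + \gamma (P^{\hat\pi_i} - P^{\pi_i}) d^{\hat\pi_i}$. (2) Solve for $d^{\hat\pi_i} - d^{\pi_i} = \gamma (I - \gamma P^{\pi_i})^{-1}(P^{\hat\pi_i} - P^{\pi_i}) d^{\hat\pi_i}$, using that $(I - \gamma P^{\pi_i})^{-1} = \sum_{k\geq 0}\gamma^k (P^{\pi_i})^k$ converges since $\gamma < 1$ and $P^{\pi_i}$ is a stochastic operator. (3) Take the total-variation ($\ell_1$) norm of both sides; since $P^{\pi_i}$ is a Markov operator it is an $\ell_1$-contraction (nonexpansive), so $\|(I-\gamma P^{\pi_i})^{-1}\|_{1\to 1} \leq 1/(1-\gamma)$, yielding $D_{\rm TV}(d^{\hat\pi_i}\|d^{\pi_i}) \leq \frac{\gamma}{1-\gamma}\,\tfrac12\|(P^{\hat\pi_i} - P^{\pi_i})d^{\hat\pi_i}\|_1$. (4) Bound the remaining term: $\tfrac12\|(P^{\hat\pi_i} - P^{\pi_i})d^{\hat\pi_i}\|_1 \leq \sum_s d^{\hat\pi_i}(s)\cdot\tfrac12\|\hat\pi_i(\cdot\mid s) - \pi_i(\cdot\mid s)\|_1 = \EE_{s\sim d^{\hat\pi_i}}[D_{\rm TV}(\hat\pi_i(\cdot\mid s)\|\pi_i(\cdot\mid s))]$, again using that $P$ is stochastic so pushing the policy difference through $P(s'\mid s,a)$ does not increase $\ell_1$ mass.

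One technical wrinkle: in step (4) the expectation naturally comes out under $s\sim d^{\hat\pi_i}$, whereas the lemma states it under $s\sim d^{\pi_i}$. I would resolve this by instead performing the add-and-subtract in step (1) the other way — adding and subtracting $\gamma P^{\hat\pi_i} d^{\pi_i}$ — so that the factored expression reads $d^{\hat\pi_i} - d^{\pi_i} = \gamma(I - \gamma P^{\hat\pi_i})^{-1}(P^{\hat\pi_i} - P^{\pi_i})d^{\pi_i}$, putting $d^{\pi_i}$ on the outside; the contraction bound then uses $(I - \gamma P^{\hat\pi_i})^{-1}$ instead, which is equally valid. The main obstacle is just being careful that the resolvent operator acts on the correct side and that all operator norms are the $\ell_1\to\ell_1$ operator norms, for which stochasticity of both $P^\pi$ and the underlying kernel $P(\cdot\mid s,a)$ gives the nonexpansiveness; none of the individual estimates is hard, but the bookkeeping of which occupancy measure ends up as the sampling distribution is where an error would most easily creep in. This is the standard "simulation lemma"/performance-difference-style argument, and I expect the whole proof to be under half a page.
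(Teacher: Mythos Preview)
Your proposal is correct and follows essentially the same route as the paper: the paper writes the occupancy measures via the resolvent identity $\hat G - G = \gamma \hat G (P_{\hat\pi_i} - P_{\pi_i}) G$ applied to the initial distribution to obtain $d^{\hat\pi_i} - d^{\pi_i} = \gamma \hat G (P_{\hat\pi_i} - P_{\pi_i}) d^{\pi_i}$, and then bounds $\|\hat G\|_1 \le 1/(1-\gamma)$ and $\|(P_{\hat\pi_i} - P_{\pi_i}) d^{\pi_i}\|_1 \le 2\,\EE_{s\sim d^{\pi_i}}[D_{\rm TV}(\hat\pi_i\|\pi_i)]$ exactly as you describe. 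Your observation about the ``wrinkle'' is spot on, and your fix (placing $(I-\gamma P^{\hat\pi_i})^{-1}$ on the left so that $d^{\pi_i}$ appears as the sampling measure) is precisely the factorization the paper uses.
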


\begin{proof}
In our proof, we consider finite MDPs, although we can apply the divergence minimizing algorithm for large-scale MDPs. We recall the definition of discounted future state distribution as follows, 
\begin{equation}
d^{\pi} (s) = (1-\gamma)\sum_{t=0}^\infty \gamma^t \rP^{t}_\pi(s).
\end{equation}
We take a vector form of $d^{\pi}$ in a given state set $\bS$, then $\rP^{t}_\pi\in \RR^{|\bS|}$ denotes a vector with components $\rP^{t}_\pi(s)=P(s_t=s|\pi)$. Further, we denote $\rP_\pi\in \RR^{|\bS|\times |\bS|}$ as the transition matrix from $s$ to $s'$ with components $P_\pi(s'|s)=\int P(s'|s,a)\pi(a|s) da$. Then we have 
\begin{equation}
\rP^{t}_\pi=\rP_\pi \rP^{t-1}_\pi=(\rP_\pi)^2 \rP^{t-2}_\pi=\ldots=(\rP_\pi)^t \mu,
\end{equation}
where the $\mu\in\RR^{|\bS|}$ is the initial state distribution. Then we can derive the vector form of state distribution as
\begin{equation}
d^\pi=(1-\gamma) \sum_{t=0}^\infty (\gamma \rP_\pi)^t \mu=(1-\gamma)(I-\gamma \rP_\pi)^{-1} \mu.
\end{equation}

Then we build the relationship between $d^{\hat{\pi}_i}$ and $d^{\pi_i}$, we have
\begin{equation}
\label{eq:d-diff}
d^{\hat{\pi}_i}-d^{\pi_i}=(1-\gamma)\big((I-\gamma \rP_{\hat{\pi}_i})^{-1}-(I-\gamma \rP_{\pi_i})^{-1}\big)\mu
\end{equation}
In the following, we denote $\hG\triangleq (I-\gamma \rP_{\hat{\pi}_i})^{-1}$ and $G\triangleq (I-\gamma \rP_{\pi_i})^{-1}$, then we have 
\begin{equation}
\label{eq:app-hG}
\begin{aligned}
\hG-G
&=\hG(G^{-1}-\hG^{-1})G=\hG(I-\gamma \rP_{\pi_i}-I-\gamma \rP_{\hat{\pi}_i})G\\
&=\gamma \hG(\rP_{\hat{\pi}_i}-\rP_{\pi_i})G.
\end{aligned}
\end{equation}
Plugging \eqref{eq:app-hG} into \eqref{eq:d-diff}, we obtain
\begin{equation}
d^{\hat{\pi}_i}-d^{\pi_i}=\gamma \hG(\rP_{\hat{\pi}_i}-\rP_{\pi_i})(1-\gamma)G\mu=\gamma \hG(\rP_{\hat{\pi}_i}-\rP_{\pi_i})d^{\pi_i}.
\end{equation}
where $d^{\pi_i}=(1-\gamma)G\mu$. Then, we can bound the $L_1$-norm of $d^{\hat{\pi}_i}-d^{\pi_i}$ as
\begin{equation}
\label{eq:app-dbound}
\begin{aligned}
\|d^{\hat{\pi}_i} - d^{\pi_i}\|_1 &= \gamma \|\hG (\rP_{\hat{\pi}_i}-\rP_{\pi_i}) d^{\pi_i}\|_1 \\
&\leq \gamma \|\hG\|_1 \|(\rP_{\hat{\pi}_i}-\rP_{\pi_i}) d^{\pi_i}\|_1.
\end{aligned}
\end{equation}
In \eqref{eq:app-dbound}, the first term $\|\hG\|_1$ is bounded by
\begin{equation}
\begin{aligned}
\big\|\hG\big\|_1 = \big\|(I - \gamma \rP_{\hat{\pi}_i})^{-1}\big\|_1 \leq \sum_{t=0}^{\infty} \gamma^t \left\|\rP_{\hat{\pi}_i}\right\|_1^t = \frac{1}{1-\gamma}.
\end{aligned}
\end{equation}
The second term $\|(\rP_{\hat{\pi}_i}-\rP_{\pi_i}) d^{\pi_i}\|_1$ of \eqref{eq:app-dbound} is bounded by
\begin{equation}
\begin{aligned}
\|(\rP_{\hat{\pi}_i}-\rP_{\pi_i}) d^{\pi_i}\|_1 &= \sum_{s'} \left| \sum_s \big(P_{\hat{\pi}_i}(s'|s)-P_{\pi_i}(s'|s)\big) d^{\pi_i}(s) \right| 
\leq \sum_{s,s'} \left| \Big(P_{\hat{\pi}_i}(s'|s)-P_{\pi_i}(s'|s)\Big)\right| d^{\pi_i}(s) \\
&= \sum_{s,s'} \left| \sum_a P(s'|s,a) \left(\hat{\pi}_i(a|s) - \pi_i(a|s) \right)\right| d^{\pi_i}(s) 
\leq \sum_{s,a,s'} P(s'|s,a) \Big|\hat{\pi}_i(a|s) - \pi_i(a|s)\Big| d^{\pi_i}(s) \\
&\leq \sum_{s,a} \Big|\hat{\pi}_i(a|s) - \pi_i(a|s) \Big| d^{\pi_i}(s) 
= 2 \EE_{s \sim d^{\pi_i}} \left[ D_{TV} (\hat{\pi}_i\|\pi_i)[s] \right].
\end{aligned}
\end{equation}
Then, we obtain
\begin{equation}
\begin{aligned}
D_{\rm TV}\big(d^{\hat{\pi}_i}\|d^{\pi_i}\big)&=\frac{1}{2}\|d^{\hat{\pi}_i} - d^{\pi_i}\|_1\leq \frac{1}{2}\gamma \frac{1}{1-\gamma} 2 \EE_{s \sim d^{\pi_i}} \left[ D_{TV} (\hat{\pi}_i\|\pi_i)[s] \right]\\
&=\frac{\gamma}{1-\gamma}\EE_{s \sim d^{\pi_i}} \left[ D_{TV} (\hat{\pi}_i\|\pi_i)[s]\right].
\end{aligned}
\end{equation}
\end{proof}

\subsection{Proof of Theorem~\ref{thm:distance}}

\begin{theorem*}[Restate of Theorem~\ref{thm:distance}]
Assuming the distance between state distribution is bounded by $D_{\rm TV}\big(d^{\hat{\pi}_i}\|d^{\pi_i}\big)\leq \delta$, the entropy difference between state distribution can be bounded by
\begin{equation}
\big|H(d^{\hat{\pi}_i})-H(d^{\pi_i})\big|\leq \delta \log\big(|\bS^{\rm pe}_i|-1\big)+h(\delta).
\end{equation}
where $h(x) := -x \log(x) - (1-x) \log (1-x)$ is the binary entropy function.
\end{theorem*}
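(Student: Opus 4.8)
The plan is to prove this as an instance of the classical continuity-of-entropy estimate, obtained by combining a maximal coupling with Fano's inequality. Write $k := |\bS^{\rm pe}_i|$, and regard $P := d^{\hat{\pi}_i}$ and $Q := d^{\pi_i}$ as probability distributions supported on the finite set $\bS^{\rm pe}_i$ (both $\hat{\pi}_i$ and $\pi_i$ only visit states there, and $\hat{\pi}_i$ merely re-normalizes $\pi_i$ onto a subset). First I would invoke the maximal-coupling characterization of total variation: there is a joint law of random variables $(X,Y)$ on $\bS^{\rm pe}_i\times\bS^{\rm pe}_i$ with marginals $X\sim P$, $Y\sim Q$, and $\PP[X\neq Y]=D_{\rm TV}(P\|Q)=:\delta'\le\delta$.

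Next I would use the chain rule for $H(X,Y)$ in two ways to get the identity $H(X)-H(Y)=H(X\mid Y)-H(Y\mid X)$. Since conditional entropies are nonnegative, it suffices to bound $H(X\mid Y)$ and, symmetrically, $H(Y\mid X)$, each by $\delta'\log(k-1)+h(\delta')$. This is exactly Fano's inequality: letting $E:=\mathbf{1}[X\neq Y]$, one has $H(X\mid Y)\le H(E\mid Y)+H(X\mid E,Y)\le h(\delta')+H(X\mid E,Y)$, and conditioned on $\{E=0\}$ we have $X=Y$ so that term vanishes, while conditioned on $\{E=1, Y=y\}$ the variable $X$ ranges over the $k-1$ remaining states of $\bS^{\rm pe}_i$, giving $H(X\mid E,Y)\le\delta'\log(k-1)$. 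The same argument with the roles of $X$ and $Y$ swapped bounds $H(Y\mid X)$. Combining yields $\big|H(P)-H(Q)\big|\le\delta'\log(k-1)+h(\delta')$.

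Finally I would replace the exact distance $\delta'$ by the hypothesized bound $\delta$. The function $g(t):=t\log(k-1)+h(t)$ has derivative $g'(t)=\log\!\big((k-1)(1-t)/t\big)$, which is nonnegative on $[0,1-1/k]$, so $g$ is non-decreasing there and $g(\delta')\le g(\delta)$ for all $\delta$ in the relevant regime $\delta\le 1-1/k$; this gives $\big|H(d^{\hat{\pi}_i})-H(d^{\pi_i})\big|\le\delta\log(|\bS^{\rm pe}_i|-1)+h(\delta)$ as stated. (Outside that regime the left-hand side is in any case $\le\log k$, so the statement still holds after the appropriate adjustment.)

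I do not expect a deep obstacle here — once the right tools are assembled the computation is routine. The points that need care are: (i) justifying the maximal coupling and being precise that it attains $\PP[X\neq Y]$ equal to the TV distance; (ii) getting the constant $k-1$ rather than $k$ in the Fano step, which is the source of the ``$-1$'' in the bound and relies on conditioning on $Y$ before applying Fano; and (iii) the monotonicity argument needed to pass from the exact distance $\delta'$ to the upper bound $\delta$, which is where one implicitly restricts to $\delta\le 1-1/k$.
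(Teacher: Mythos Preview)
Your proposal is correct and follows essentially the same route as the paper: construct a maximal coupling so that $\Pr[X\neq Y]$ equals the total variation distance, then bound $|H(X)-H(Y)|$ by $\max\{H(X\mid Y),H(Y\mid X)\}$ and apply Fano's inequality to each conditional entropy. You are in fact slightly more careful than the paper, which stops at the bound in terms of the exact distance $\delta'=D_{\rm TV}(d^{\hat{\pi}_i}\|d^{\pi_i})$ and asserts the conclusion without the monotonicity argument you supply for passing from $\delta'$ to $\delta$.
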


\begin{proof}
We first introduce two random variables $X$ and $Y$ on the state space $\bar{\mathcal{S}}=\bS^{\rm pe}_i$, such that $X$ follows from the distribution $d^{\hat{\pi}_i}$, while $Y$ follows from the distribution $d^{\pi}_i$. Next,  we use a \emph{coupling technique} to construct a joint probability distribution of $X$ and $Y$, denoted by $g_{XY} \in \Delta(\bar{\mathcal{S}} \times \bar{\mathcal{S}})$, such that: 
\begin{enumerate}
    \item The marginal distributions of $g_{XY}$, denoted by $g_X$ and $g_Y$, satisfy $g_X = d^{\hat{\pi}_i}$ and $g_Y = d^{\pi_i}$ respectively. More precisely, we have
    \begin{align}
    &g_X(s) :=\sum_{s' \in \s}g_{XY}(s,s') = d^{\hat{\pi}_i}(s), \quad \forall s \in \bar{\mathcal{S}}, \label{eq:app-thmq1}\\
    & 
    g_Y(s) := \sum_{s' \in \s}g_{XY}(s',s) = d^{\pi_i}(s), \quad \forall s \in \bar{\mathcal{S}}. \label{eq:app-thmq2}
    \end{align} 
    \item For all $s \in \bar{\mathcal{S}}$, $g_{XY}(s, s) = \min\{d^{\hat{\pi}_i}(s), d^{\pi_i}(s) \}$. 
\end{enumerate}
For $s, s' \in \bar{\mathcal{S}}$ such that $s \ne s'$, we can choose the value of $g_{XY}(s, s')$ arbitrarily, as long as the conditions in~\eqref{eq:app-thmq1} and~\eqref{eq:app-thmq2} are satisfied. Based on this joint probability distribution $g_{XY}$, we can calculate the probability that $X$ does not equal $Y$:
\begin{align}
\Pr(X \ne Y) &= 1 - \sum_{s \in \s} g_{XY}(s,s) \notag \\
&= 1-  \sum_{s \in \bar{\mathcal{S}}} \min\{d^{\hat{\pi}_i}(s), d^{\pi_i}(s) \} \notag\\
&= \frac{1}{2}\left[\sum_{s\in \bar{\mathcal{S}}} d^{\hat{\pi}_i}(s) + d^{\pi_i}(s) - 2\min\{d^{\hat{\pi}_i}(s),d^{\pi_i}(s) \}  \right] \notag\\
&= \frac{1}{2} \sum_{s \in \bar{\mathcal{S}}} |d^{\hat{\pi}_i}(s) - d^{\pi_i}(s)| \notag \\
&= D_{\mathrm{TV}}(d^{\hat{\pi}_i}\|d^{\pi_i}). \label{eq:app-tv}
\end{align}
Since the random variable $X$ follows from the distribution $d^{\hat{\pi}_i}$, the entropy of $X$, denoted by $H(X)$, is equivalent to the entropy $H(d^{\hat{\pi}_i})$.  Similarly, the entropy $H(Y)$ is equivalent to $H(d^{\pi_i})$.
Using standard information-theoretic inequalities, we have 
\begin{align}
\begin{cases}
\big|H\big(d^{\hat{\pi}_i}\big) - H\big(d^{\pi_i}\big)\big| = H(X) - H(Y) \le H(X) - I(X;Y) = H(X|Y), \quad \text{if } H(X) \ge H(Y); \\
\big|H\big(d^{\hat{\pi}_i}\big) - H\big(d^{\pi_i}\big)\big| = H(Y) - H(X) \le H(Y) - I(X;Y) = H(Y|X), \quad \text{if } H(X) < H(Y);
\end{cases}
\end{align}
where $I(X;Y)$ is the \emph{mutual information} of $X$ and $Y$ (with respect to the joint probability distribution $g_{XY}$), and $H(X|Y)$ and $H(Y|X)$ denote the \emph{conditional entropy}. Applying Fano's inequality yields that
\begin{align}
H(X|Y) \le   \Pr(X \ne Y) \cdot \log(|\bar{\s}|-1)  + h(\Pr(X \ne Y) ), \\
H(Y|X) \le   \Pr(X \ne Y) \cdot \log(|\bar{\s}|-1)  + h(\Pr(X \ne Y) ).
\label{eq:app-fano}
\end{align}
Combining Eqns.~\eqref{eq:app-tv}-\eqref{eq:app-fano}, we eventually obtain that
\begin{align}
    |H(d^{\hat{\pi}_i}) - H(d^{\pi_i})| \le D_{\mathrm{TV}}\big(d^{\hat{\pi}_i}\|d^{\pi_i}\big) \cdot \log(|\bar{\s}|-1)  + h(D_{\mathrm{TV}}\big(d^{\hat{\pi}_i}\|d^{\pi_i})\big),
\end{align}
which concludes our proof under the assumption that $D_{\mathrm{TV}}(d^{\hat{\pi}_i}\|d^{\pi_i})\leq \delta$.
\end{proof}

Theorem~\ref{thm:distance} shows that if the distance between two probability distributions is bounded, then their entropy difference can also be bounded. A similar proof of Theorem~\ref{thm:distance} was first appeared in \citet{zhang2007estimating} and \citet{csiszar2011information} (Ex 3.10). The proof relies on a coupling technique (used to relate two random variables), standard information-theoretic inequalities in \citet{cover1999elements} (Sec 2.3), and Fano's inequality in \citet{cover1999elements} (Sec 2.10).

\newpage
\section{Additional Experiments in Maze}\label{app:maze}

\subsection{Tree Map}

We conducted an additional experiment on a Tree-like map. This map is more challenging since the agent needs to explore the deepest branches to maximize the state coverage. According to Figure~\ref{fig:tree_maze}, empowerment-based methods can learn distinguishable skills while having limited exploration ability in the tree map. In contrast, CIC obtains a well global state coverage while the trajectories of different skills are indistinguishable. CeSD can also reach the deepest branches and overcome the limitations of CIC. Specifically, CeSD generates distinguishable skills, where the different skills perform independent exploration and have fewer overlapping visitation areas.

\begin{figure*}[h!]
\begin{center}
\centerline{
\includegraphics[width=1.0\columnwidth]{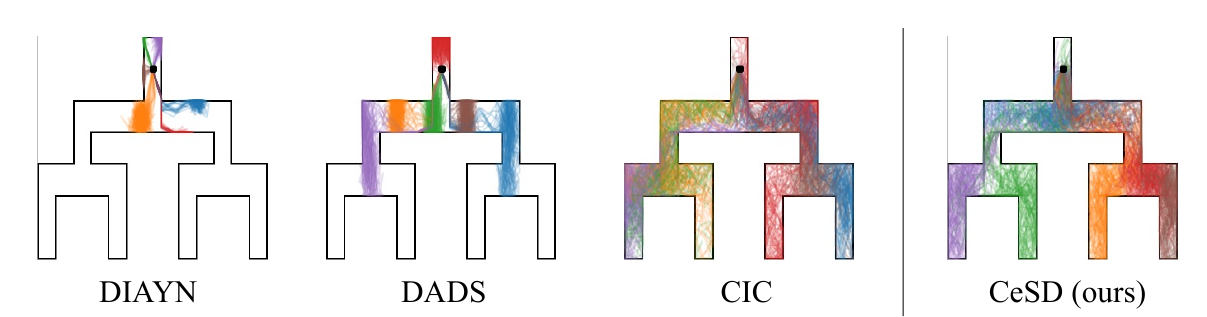}}
\vspace{-0.5em}
\caption{The visualization of skill discovery methods in a Tree-like maze. Different colors represent the state trajectories with different skill vectors. We let the agent start moving from the black dot in the upper corner and sample 20 trajectories for each skill for visualization. Our method can explore the deepest branches and also learn distinguishable skills.}
\label{fig:tree_maze}
\vspace{-2em}
\end{center}
\end{figure*}

\subsection{The Comparison of MI and Entropy Estimation}

We compare the mutual information (MI) between states and skills  (i.e., $I(S;Z)$) and the state entropy (i.e., $H(d^{\pi}(s))$) of the final policies in different methods, where the $d^{\pi}(s)$ is estimated by generating trajectories from all skills $\{\pi_i\}_{i\in[n]}$. 

To estimate the MI term, we generate several trajectories for each learned skill and perform MI estimation using the MINE \cite{mine-2018} estimator. MINE adopts a score function $T:\cS\times\cZ\rightarrow \RR$ represented by a neural network in estimation. The joint samples come from the joint distribution $(s,z)\sim P_{S,Z}$, where the states are generated by the corresponding skills. $\bar{s}\sim d^{\pi}_s$ and  $\bar{z}\sim P_Z$ are sampled from the corresponding marginal distribution. Then the MINE estimation given as: $\sup_{T\in\cF}\EE_{P_{S,Z}}[T(s,z)]-\log (\EE_{P_S\otimes P_Z}[e^{T(s,z)}])$, where $\cF$ is the function class. In addition, we perform entropy estimation by using the particle-based entropy estimator \cite{apt}, which is the same as in our method. 

As shown in Figure~\ref{fig:entropy-mine}, CIC obtains much lower MI than other skill discovery methods but obtains the largest state entropy. CeSD can balance state coverage and empowerment via partition exploration and distribution constraints, which leads to diverse skills and also has better state coverage than previous skill discovery algorithms.

\begin{figure}[h]
\begin{center}
\centerline{
\includegraphics[width=0.8\columnwidth]{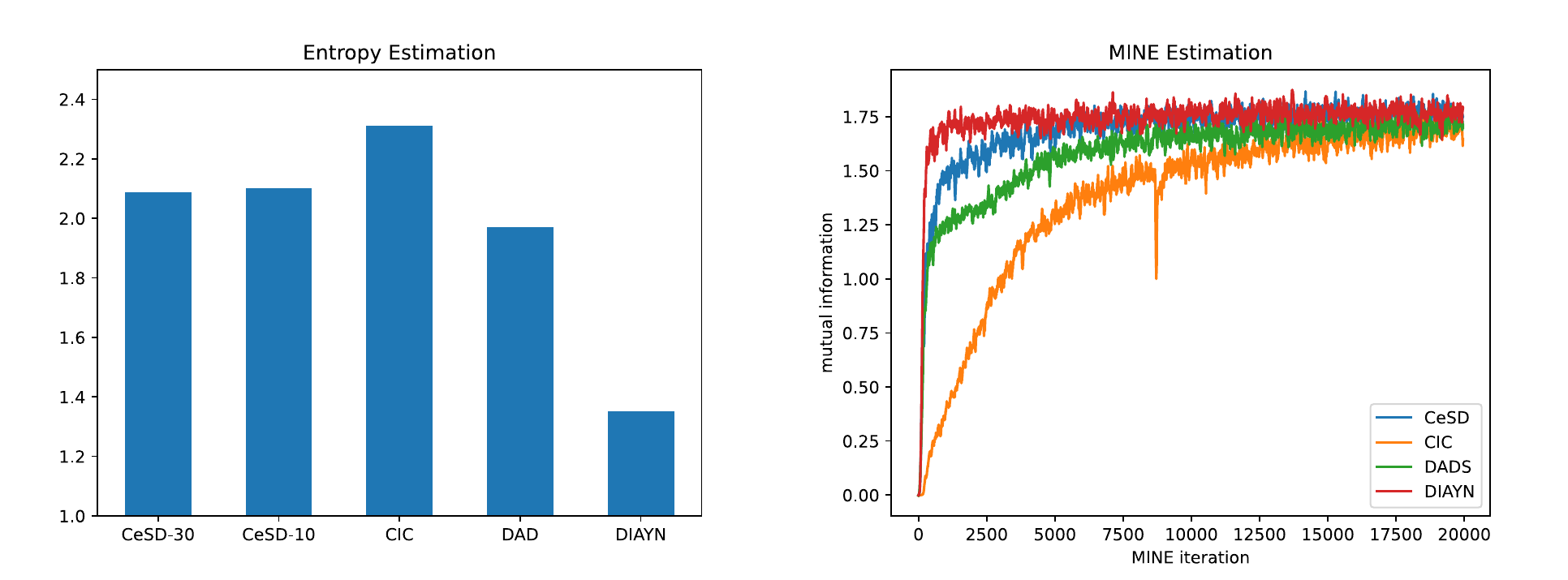}}
\vspace{-0.5em}
\caption{The qualitative result for the mutual information estimation and the entropy estimation in maze.}
\vspace{-1em}
\label{fig:entropy-mine}
\end{center}
\end{figure}

\newpage
\section{Additional Experiments in URLB}\label{app:urlb}

\subsection{Implementation Details and Hyperparameters}

We introduce the implementation details of the proposed CeSD algorithm as follows. 
(\romannumeral1) \textbf{Skills.} In the pertaining stage, a skill vector $z_i$ is sampled from a $n$-dimensional discrete distribution following a uniform distribution every fixed time-steps. The agent interacts with the environment based on $\pi(a|s,z_i)$, and the obtained transition $(s,a,r,s',z_i)$ is stored in a replay buffer. We use $n=16$ skills in all tasks. 
(\romannumeral2) \textbf{Clustering}. In training, we sample a batch of transitions $\{(s,a,r,s',z_i)\}$ from the replay buffer and perform clustering for the states based on the prototypes. The encoder network $f_{\theta}(s)$ of states is an MLP network with ${\rm obs\_dim}\rightarrow1024\rightarrow1024\rightarrow1024$ architecture and ReLU activations. The output of $f_{\theta}(s)$ is the same dimensions as the prototype $c_i\in\RR^{m}$, where we use $m=16$ in experiments. We perform a coarse search for prototype updates per iteration for each domain from $\{4,5,6\}$. The temperature value in Eq.~\eqref{eq:cluster} is set to $\tau=0.1$. The prototypes are trained using stochastic gradient optimization with Adam with a learning rate of $10^{-4}$. 
(\romannumeral3) \textbf{Entropy estimation}. We perform particle estimation on the feature space and $k$ is set to 16. The entropy estimation is performed in each cluster independently based on the prototypes.
(\romannumeral4) \textbf{Ensemble value functions}. Each $Q$-network is a MLP with ${\rm obs\_dim}+{\rm action\_dim}\rightarrow 512\rightarrow1024\rightarrow1024\rightarrow1$. In practice, we use the vectorized linear layers in critic for parallel inference of the ensemble $Q$-network. The ensemble size of the critic is the same as the number of skills. We denote the ensemble $Q$-value for a batch with $b$ samples as $\mathbf{Q}\in\RR^{n\times b}$. Then we adopt a mask matrix $\mathbf{M}\in\RR^{n\times b}$ where each column is a one-hot vector $[0,\ldots,1,\ldots,0]$ that denotes the cluster index of this transition by following $\mathbf{p}^{(t)}$. Then the masked value function is calculated by $\mathbf{Q}\odot\mathbf{M}$ for the TD-error calculation. 
(\romannumeral5) \textbf{Policy learning}. The policy network is an MLP with $\rm obs\_dim+skill\_dim\rightarrow 50\rightarrow 1024\rightarrow 1024\rightarrow action\_dim$ architecture, where we use the same actor architecture for CeSD and other baselines. (\romannumeral6) \textbf{Intrinsic reward}. For practical implementation, the state set $\bS^{\rm pe}$ and $\bS^{\rm clu}$ used in intrinsic reward is calculated in batches rather than all collected states. We use the batch size of 1024 in all methods. 

We adopt DDPG as the basic algorithm in policy training for all baselines. Table \ref{table:common_hyperparams} summarizes the hyperparameters of our method and the basic DDPG algorithm. We refer to our released code for the details.

\begin{table}[h]
\caption{\label{table:common_hyperparams} Hyper-parameters for CeSD and the basic DDPG algorithm for all methods.}
\vspace{0.5em}
\centering
\begin{tabular}{lc}
\hline
\textbf{BeCL hyper-parameter}       & \textbf{Value} \\
\hline

\: skill dim / ensemble size $n$ & 16 discrete \\
\: prototype dim $m$ & 16 \\
\: prototype update iterations in clustering & \{4, 5, 6\}\\
\: temperature $\kappa$ for clustering & 0.1 \\
\: $k$-nearest-neighbor in particle estimation & 16 \\
\: skill sampling frequency (steps) & 50 \\
\hline
\textbf{DDPG hyper-parameter}       & \textbf{Value} \\
\hline
\: replay buffer capacity & $10^6$ \\
\: action repeat & $1$ \\
\: seed frames & $4000$ \\
\: $n$-step returns & $3$ \\
\: mini-batch size & $1024$  \\
\: seed frames & $4000$ \\
\: discount ($\gamma$) & $0.99$ \\
\: optimizer & Adam \\
\: learning rate & $10^{-4}$ \\
\: agent update frequency & $2$ \\
\: critic target EMA rate ($\tau_Q$) & $0.01$ \\
\: features dim. & $1024$  \\
\: hidden dim. & $1024$ \\
\: exploration stddev clip & $0.3$ \\
\: exploration stddev value & $0.2$ \\
\: number of pre-training frames &  $2\times 10^6$ \\
\: number of fine-tuning frames & $1 \times 10^5$ \\
\hline
\end{tabular}
\end{table}

\subsection{Algorithmic Description}

We give algorithmic descriptions of the pretraining and finetuning stages in Algorithm~\ref{app:pretrain_algo} and Algorithm~\ref{app:finetune_algo}, respectively. We evaluate the adaptation efficiency of BeCL following the pretraining and finetuning procedures in URLB. Specifically, in the pretraining stage, latent skill $z$ is changed and sampled from a discrete distribution $p(z)$ in every fixed step and the agent interacts with the environments based on $\pi_\theta(a|s,z)$. In the finetuning stage, a skill is randomly sampled and keep fixed in all steps. The actor and critic are updated by extrinsic reward after first 4000 steps.

In our experiments of the \emph{Walker} domain, pretraining one seed of CeSD for 2M steps takes about 11 hours while fine-tuning downstream tasks for 100k steps takes about 20 minutes with a single 4090 GPU. 

\begin{algorithm}[h]
\caption{Unsupervised Pretraining of CeSD}
\label{app:pretrain_algo}
\begin{algorithmic}
  \STATE {\bfseries Input:} number of pretraining frames $N_{PT}$, skill dimension $n$, batch size $N$, and skill sampling frequency $N_{\rm update}$.
  \STATE \textbf{Initialize} the environment, random actor $\pi_\psi(a | s,z)$, ensemble $Q$-network $\{Q_{\phi_i}(s,a)\}$ and target network $\{Q_{\phi'_i}(s,a)\}$, state encoder $f_\theta$, the prototype vectors $\{c_1,\ldots,c_n\}$, and replay buffer $\cD$
  \FOR{$t=1$ {\bfseries to} $N_{PT}$} 

    \STATE Randomly choose $z_i$ \text{from category distribution} $p(z)$ every $N_{\rm update}$ steps.
    \STATE Interact with environment $\tau_{z_i}\sim \pi_\psi(a | s, z_i), ~p(s'|s, a)$ and store the transitions to buffer $\cD$.
    \IF{$t \geq t_0$}
    \STATE Sample a batch a transitions from $\cD$ : $\{(s_i,a_i,s_i^{\prime},z_i)\}_{i\in[N]}$.
    \STATE Calculate $\mathbf{p}^{(t)}$ for each transitions based on prototypes via Eq.~\eqref{eq:cluster} and obtains the cluster index as $\{\hat{z}_i\}_{i\in [N]}$. 
    \STATE Perform particle estimation in each cluster and calculate the entropy-based intrinsic reward $\{r^{\rm cesd}_i\}_{i\in[N]}$.
    \STATE Calculate the constraint reward $\{r^{\rm reg}_i\}_{i\in [N]}$ based on the cluster index $\{\hat{z}_i\}$ (i.e., $\bS^{\rm clu}$) and skill label $\{z_i\}$ (i.e., $\bS^{\rm pe}$).
    \STATE Calculate mask matrix $\mathbf{M}$ and update the ensemble $Q$-network with $\{(s_i,a_i,s_i^{\prime},z_i)\}_{i\in[N]}$ and $\{r^{\rm cesd}_i+\alpha\cdot r^{\rm reg}_i\}_{i\in [N]}$.
    \STATE Update the policy network $\pi_\psi(a | s,z_i)$ by maximizing the corresponding critic function $Q_{\phi_i}(s,a)$.
    \ENDIF
  \ENDFOR
\end{algorithmic}
\end{algorithm}
\vspace{-1em}

\begin{algorithm}[h]
\caption{Downstream Finetuning of CeSD} 
\label{app:finetune_algo}
\begin{algorithmic}
  \STATE {\bfseries Input: }actor $\pi_\psi(a | s,z)$ and critic $Q_\phi(s,a)$ with weights from the pretraining phase, randomly sampled a skill vector $z^\star$ from $p(z)$, and the number of finetuning frames $N_{FT}$ batch size $N$. Initialized environment and replay buffer $\cD$.
  \FOR{$t=1$ {\bfseries to} $N_{FT}$} 

    \STATE Choose the action by $a_t \sim \pi_\theta(a | s_t,z^\star)$.
    \STATE Interact with environment to obtain $s_{t+1}, r_t$ with extrinsic reward from downstream task.
    \STATE Store $(s_{t}, a_t, s_{t+1}, r_{t}, z^\star)$ into buffer $\mathcal{D}$.
    \IF{$t \geq 4,000$}
    \STATE Sample a batch $\{(\mathbf{a}^{(i)}, \mathbf{s}^{(i)}, \mathbf{s}^{\prime(i)}, \mathbf{r}^{(i)}, \mathbf{z}^{(i)})\}_{i=1}^{N}$ from the replay buffer $\mathcal{D}$.
    \STATE Update actor $\pi_\theta(a | s,z^\star)$ and critic $Q_\psi(s,a,z^\star)$ using extrinsic reward $r$ in Eq.~\eqref{eq:tderror} and Eq.~\eqref{eq:actor}.
    \ENDIF
  \ENDFOR
\end{algorithmic}
\end{algorithm}

\subsection{Description of Baselines}

A comparison of intrinsic rewards and representations used in unsupervised RL baselines is summarized in Table~\ref{table:baselines}. According to the taxonomy in URLB \cite{URLB}, (\romannumeral1) the \textbf{knowledge-based} baselines adopt curiosity measurements by training an encoder to predict the dynamics, and use the prediction-error of next-state (e.g., ICM \cite{pathak2017curiosity}), prediction variance (e.g., Disagreement \cite{pathak2019disagreement}), or the divergence between a random network prediction (e.g., RND \cite{burda2018rnd}) as intrinsic rewards; (\romannumeral2) the \textbf{data-based} or entropy-based methods estimate the state entropy via particle estimation and use the state entropy estimation as the intrinsic reward in exploration, including as APT \cite{apt}, ProtoRL \cite{proto} and CIC \cite{cic}; (\romannumeral3) the \textbf{competence-based} or empowerment-based baselines aim to learn latent skill $z$ by maximizing the MI between states and skills: $I(S;Z) = H(S) - H(S|Z) = H(Z) - H(Z|S)$. The different methods adopt various variational forms in estimating the MI term, including the forward form in APS \cite{aps} and DADS \cite{dads}, and the reverse form in DIAYN \cite{diayn}. BeCL \cite{becl} is also a competence-based method and adopts a multi-view perspective and maximizes the MI term $I(S^{(1)};S^{(2)})$, where $S^{(1)}$ and $S^{(2)}$ are generated by the same skill. 

We adopt the baselines of open source code implemented by URLB (\url{https://github.com/rll-research/url_benchmark}), CIC (\url{https://github.com/rll-research/cic}), and BeCL (\url{https://github.com/Rooshy-yang/BeCL}). CeSD can be considered as a data-based method, but also has the advantages of competence-based methods in learning diverse skills. We adopt partition exploration with clusters to learn distinguishable skills without MI estimation. More descriptions of the baselines can be found in URLB \cite{URLB}. 

\begin{table}[h]
  \caption{Summary of baseline methods.}
  \label{table:baselines}
  \centering
\resizebox{\columnwidth}{!}{
  \begin{tabular}{llll}
    \toprule

    Name     & Algo. Type     & Intrinsic Reward & Explicit max $H(s)$\\
    \midrule
    ICM~\cite{pathak2017curiosity} & Knowledge  & $ \| f(s_{t+1}|s_t,a_t) - s_{t+1} \|^2$ & No \\
    Disagreement~\cite{pathak2019disagreement}     & Knowledge & $\textrm{Var} \{ f_i(s_{t+1}|s_t,a_t) \} \quad i =1,\dots,N$   & No   \\
    RND~\cite{burda2018rnd}     & Knowledge       &  $\| f(s_t,a_t) - \tilde{f}(s_t, a_t) \|^2_2$ & No \\

    \midrule 
    APT~\cite{apt} & Data & $ \sum_{j\in \textrm{KNN}} \log \| f(s_t) - f(s_j) \| \quad f \in \textrm{random or ICM }$   & Yes \\
    ProtoRL~\cite{proto}     & Data & $\sum_{j\in \textrm{KNN}} \log \| f(s_t) - f(s_j) \| \quad f \in \textrm{prototypes}$     & Yes  \\
     CIC~\cite{cic}  & Data \tablefootnote{The newest NeurIPS version of CIC \url{https://openreview.net/forum?id=9HBbWAsZxFt} has two designs of intrinsic reward including the NCE term and KNN reward, which represent competence-base and data-based designs respectively. Since CIC obtains the best performance in URLB with KNN reward only and NCE is used to update representation, we use KNN reward as its intrinsic reward and consider it as a data-based method in this paper.}	& $ \sum_{j\in \textrm{KNN}} \log \| f(s_t,s'_t) - f(s_j,s'_j) \| \quad f \in \textrm{contrastive} $	& Yes \\     
   \midrule 
     SMM~\cite{lee2019smm}     & Competence       &  $\log p^*(s) -\log q_z(s)  - \log p(z) + \log d(z|s)$ & Yes \\
    DIAYN~\cite{diayn} & Competence  & $  \log q(z|s) - \log p(z)$   & No  \\
    APS~\cite{aps}    & Competence & $r^{\text{APT}}_{t}(s) + \log q(s|z)$   & Yes
    \\
    BeCL~\cite{becl}  &   Competence      & $\exp(f(s^{(1)}_t)^{\top}f(s^{(2)}_t)/\kappa ) / \sum_{s_j \sim S^{-}\bigcup s_t^{(2)}}\exp (f(s_j)^{\top}f(s^{(1)}_t)/\kappa $ & No \\
    \bottomrule
  \end{tabular}}
\end{table}

\subsection{URLB Environments}

An illustration of URLB tasks is given in Figure~\ref{fig:env_urlb}. There are three domains (i.e., \emph{Walker}, \emph{Quadruped}, and \emph{Jaco}), and each domain has four different downstream tasks. The environment is based on DMC \cite{dmc}. The episode lengths for the Walker and Quadruped domains are set to 1000, and the episode length for the Joco domain is set to 250, which results in the maximum episodic reward for the Walker and Quadruped domains being 1000, and for Jaco Arm being 250. 

\begin{figure}[h]
\begin{center}
\centerline{
\includegraphics[width=0.8\columnwidth]{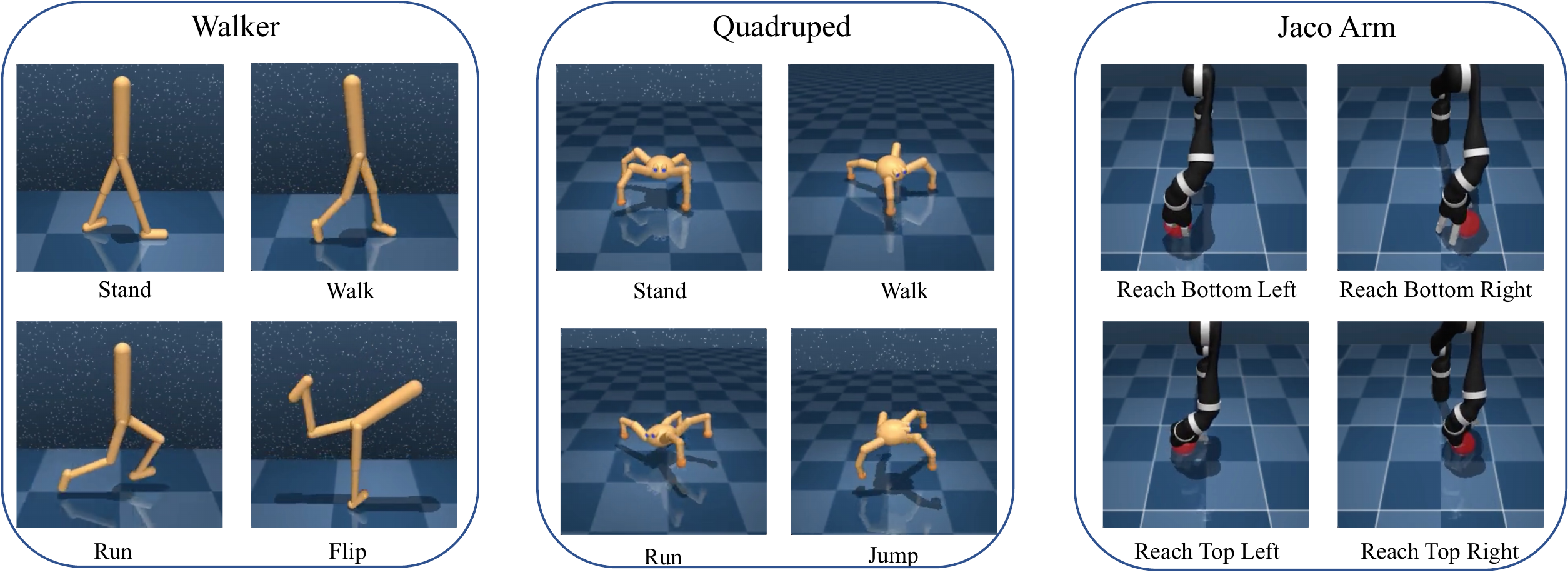}}
\caption{Illustration of domains and downstream tasks in URLB \cite{URLB}. Each domain has four downstream tasks.}
\label{fig:env_urlb}
\end{center}
\end{figure}

\subsection{Visualization of Skills}\label{app:visualize}

As shown in Figure~\ref{fig:vis-skill-walk}, Figure~\ref{fig:vis-skill-quad}, and Figure~\ref{fig:vis-skill-jaco}, we give more visualization results of DMC domains. CeSD can learn various locomotion skills, including standing, walking, rolling, moving, somersault, and jumping in \emph{Walker} and \emph{Quadruped} domains; and also learns various manipulation skills by moving the arm to explore different areas, opening and closing the gripper in different locations in \emph{Jaco} domain. The learned meaningful skills lead to superior generalization performance in the fine-tuning stage of various downstream tasks. 

\begin{figure}[h!]
\begin{center}
\centerline{
\includegraphics[width=1.0\columnwidth]{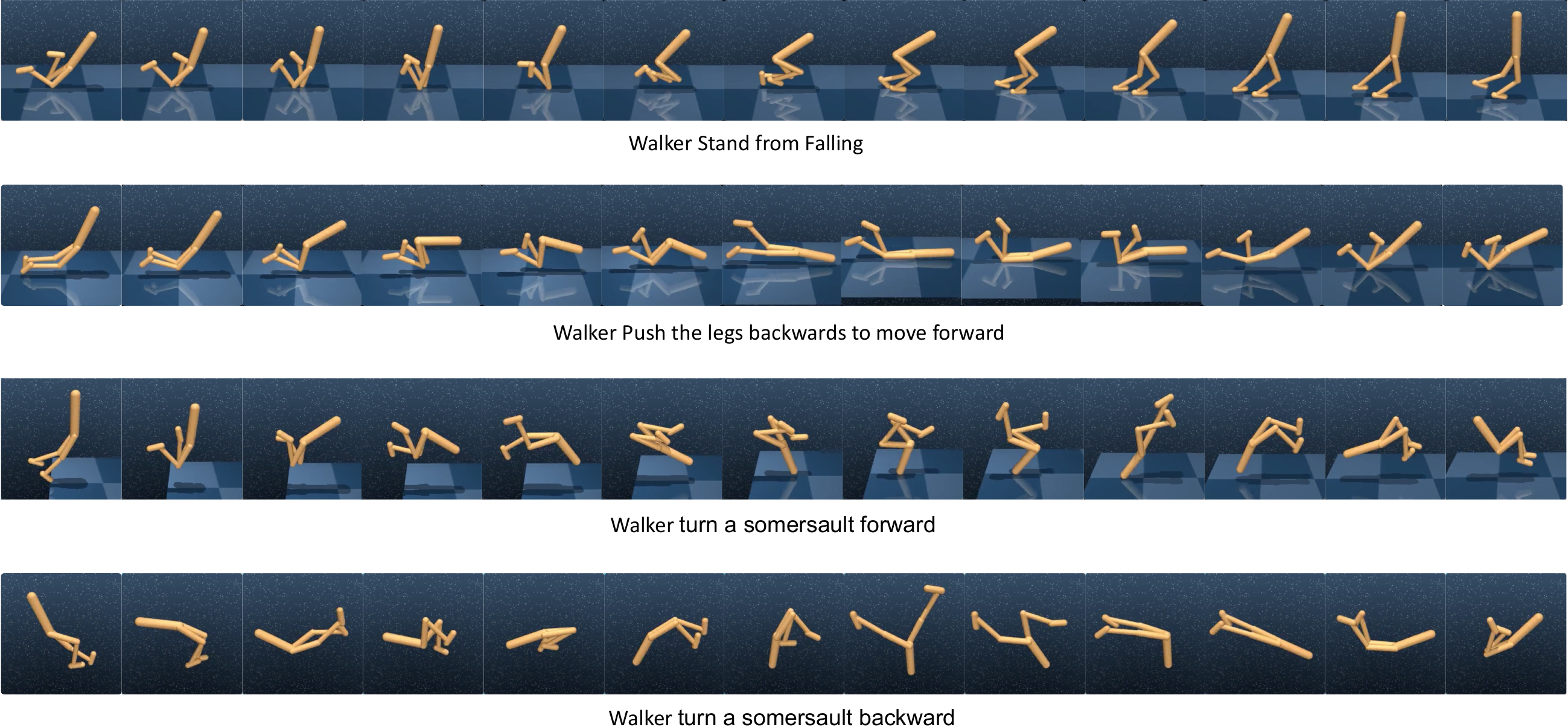}}
\caption{Visualization of representative skills learned of CeSD in the Walker domain. The Walker agent learns some interesting skills like standing and moving. The agent also learns highly difficult skills that turn a somersault forward and backward.  
}
\label{fig:vis-skill-walk}
\end{center}
\end{figure}

\begin{figure}[h!]
\begin{center}
\centerline{
\includegraphics[width=1.0\columnwidth]{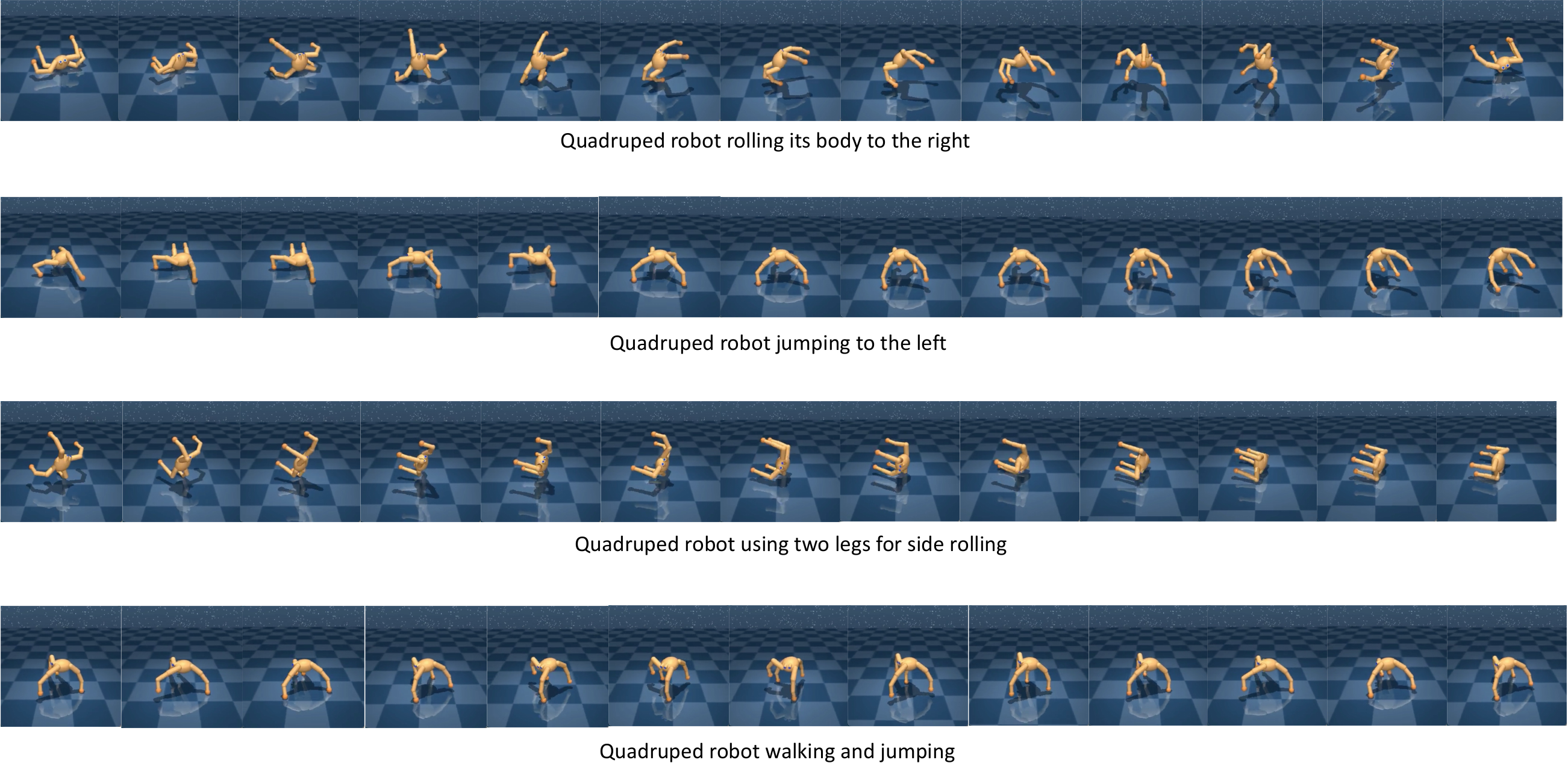}}
\caption{Visualization of representative skills learned of CeSD in the Quadruped domain. The Quadruped agent learns challenging skills like walking, rolling, and jumping that benefit downstream tasks. Also, a novel two-leg rolling skill is learned in pre-training.   
}
\label{fig:vis-skill-quad}
\end{center}
\end{figure}

\begin{figure}[h!]
\begin{center}
\centerline{
\includegraphics[width=1.0\columnwidth]{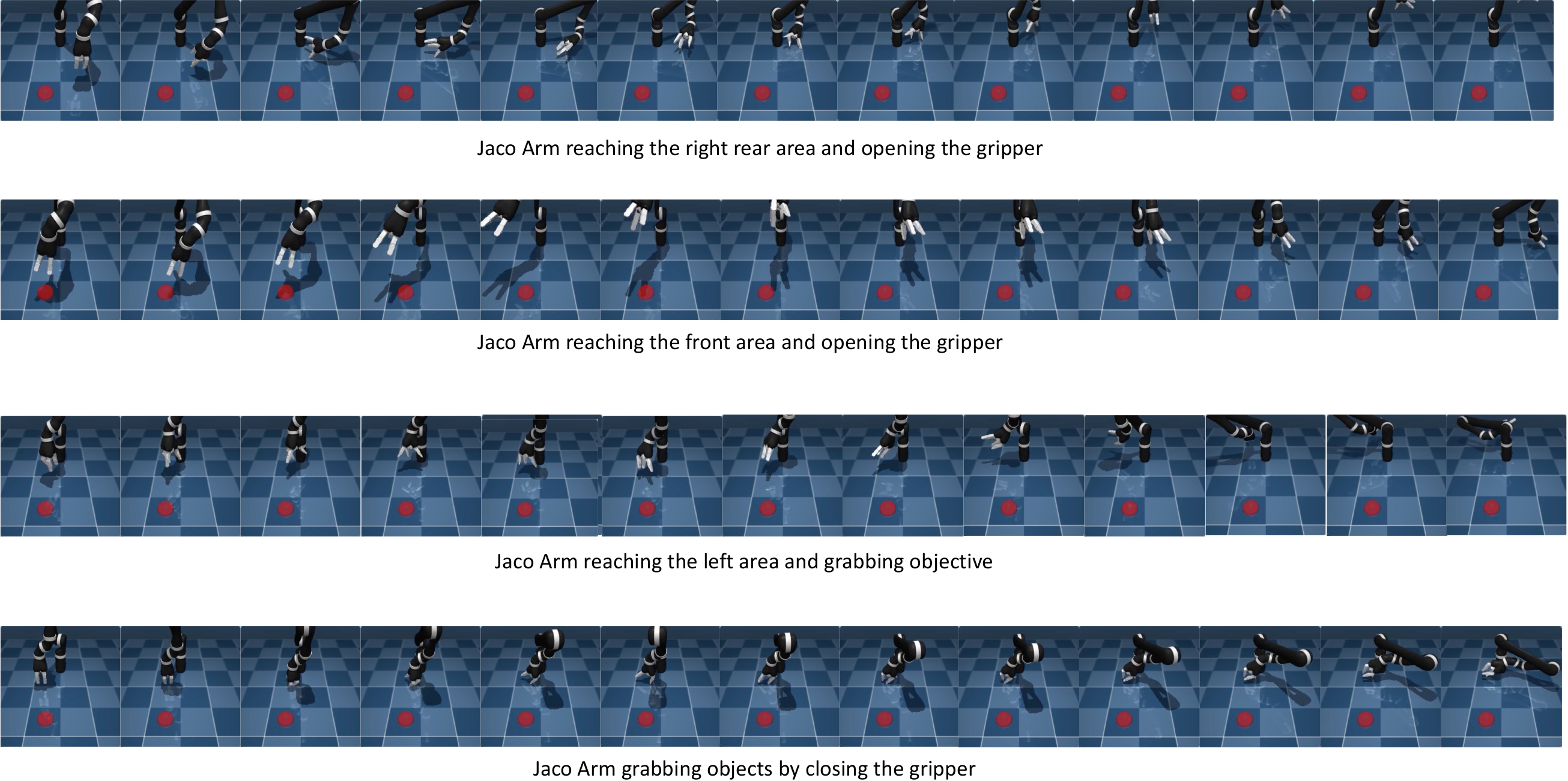}}
\caption{The Jaco Arm agent learns various manipulation skills, including reaching different locations, which allows fast adaptation in downstream tasks. The agent also learned to open and close grippers to manipulate objects. 
}
\label{fig:vis-skill-jaco}
\end{center}
\end{figure}

\newpage
\subsection{Numerical Results}
\label{app:dmc-score}
We report the individual normalized return of different methods in state-based URLB after 2M steps of pretraining and 100k steps of finetuning, as shown in Table~\ref{table:numerical_result}. In the \emph{Quadruped} and \emph{Jaco} domains, BeCL obtains state-of-the-art performance in downstream tasks. In the \emph{Walker} domain, CeSD shows competitive performance against the leading baselines.

\begin{table}[h!]
\centering
\caption{Results of CeSD and other baselines on state-based URLB. All baselines are pre-trained for 2M steps with only intrinsic rewards in each domain and then finetuned to 100K steps in each downstream task by giving the extrinsic rewards. All baselines are run for 10 seeds per task. The highest scores are highlighted.}
\vspace{0.5em}
\label{table:numerical_result}
\resizebox{\columnwidth}{!}{
\begin{tabular}{c|c|ccc|cccc|cccc|c}
\toprule
Domain                     & Task               & DDPG   & ICM        & Disagreement & RND        & APT        & ProtoRL    & SMM        & DIAYN      & APS        & CIC        & BeCL  & CeSD     \\ \hline
\multirow{4}{*}{Walker}    & {Flip}               & {538±27} & 390$\pm$10 & 332$\pm$7    & {506$\pm$29} & 606$\pm$30 & {549$\pm$21} & 500$\pm$28 & 361$\pm$10 & 448$\pm$36 & {\textbf{641$\pm$26}} & 611$\pm$18 & 541$\pm$17 \\
& {Run}                & {325±25} & 267$\pm$23 & 243$\pm$14   & {403$\pm$16} & 384$\pm$31 & {370$\pm$22} & 395$\pm$18 & 184$\pm$23 & 176$\pm$18 & {\textbf{450$\pm$19}} & 387$\pm$22 & 337$\pm$19\\
& {Stand}              & {899±23} & 836$\pm$34 & 760$\pm$24   & {901$\pm$19} & 921$\pm$15  & {896$\pm$20} & 886$\pm$18 & 789$\pm$48 & 702$\pm$67 & 959$\pm$2  & 952$\pm$2 & \textbf{960$\pm$3} \\
& {Walk}               & {748±47} & 696$\pm$46 & 606$\pm$51   & {783$\pm$35} & 784$\pm$52 & {836$\pm$25} & 792$\pm$42 & 450$\pm$37 & 547$\pm$38 & {\textbf{903$\pm$21}} & 883$\pm$34 & 834$\pm$34 \\ \hline
\multirow{4}{*}{Quadruped} & {Jump}               & {236±48} & 205$\pm$47 & 510$\pm$28   & {626$\pm$23} & 416$\pm$54 & {573$\pm$40} & 167$\pm$30 & 498$\pm$45 & 389$\pm$72 & {565$\pm$44} & 727$\pm$15 & \textbf{755$\pm$14} \\
& {Run}                & {157±31} & 125$\pm$32 & 357$\pm$24   & {439$\pm$7}  & 303$\pm$30 & {324$\pm$26} & 142$\pm$28 & 347$\pm$47 & 201$\pm$40 & {445$\pm$36} & 535$\pm$13 & \textbf{586$\pm$25}\\
& {Stand}              & {392±73} & 260$\pm$45 & 579$\pm$64   & 839$\pm$25 & 582$\pm$67 & {625$\pm$76} & 266$\pm$48 & 718$\pm$81 & 435$\pm$68 & {700$\pm$55} & 875$\pm$33 & \textbf{919$\pm$11} \\
& {Walk}               & {229±57} & 153$\pm$42 & 386$\pm$51   & {517$\pm$41} & 582$\pm$67 & {494$\pm$64} & 154$\pm$36 & 506$\pm$66 & 385$\pm$76 & {621$\pm$69} & 743$\pm$68 & \textbf{889$\pm$23} \\ \hline
\multirow{4}{*}{Jaco}      & {Re. bottom left}  & {72±22}  & 88$\pm$14  & 117$\pm$9    & {102$\pm$9}  & 143$\pm$12 & {118$\pm$7}  & 45$\pm$7   & 20$\pm$5   & 84$\pm$5   & {154$\pm$6}  & 148$\pm$13 & \textbf{208$\pm$5} \\
& {Re. bottom right} & {117±18} & 99$\pm$8   & 122$\pm$5    & {110$\pm$7} & 138$\pm$15 & 138$\pm$8  & 60$\pm$4   & 17$\pm$5   & 94$\pm$8   & 149$\pm$4 & 139$\pm$14 & \textbf{186$\pm$13} \\
& {Re. top left}     & {116±22} & 80$\pm$13  & 121$\pm$14   & {88$\pm$13}  & 137$\pm$20 & {134$\pm$7}  & 39$\pm$5   & 12$\pm$5   & 74$\pm$10  & 149$\pm$10 & 125$\pm$10 & \textbf{215$\pm$4}\\
& {Re. top right}    & {94±18}  & 106$\pm$14 & 128$\pm$11   & {99$\pm$5}   & 170$\pm$7 & {140$\pm$9}  & 32$\pm$4   & 21$\pm$3   & 83$\pm$11  & {163$\pm$9}  & 126$\pm$10 & \textbf{195$\pm$9} \\ \hline           
\end{tabular}}
\end{table}

\section{More Discussions}

\subsection{Difference to Mixture-of-Expert (MoE) \cite{celik2022specializing}} 

The fundamental difference is the problem setting. We focus on unsupervised skill discovery, aiming to learn distinguishable skills without extrinsic reward and task structure information, for efficiently solving downstream tasks via finetuning skills. In contrast, the MoE work addresses learning skills in the context-conditioned tasks with extrinsic reward, where different tasks are represented by different contexts $c$. Correspondingly, the learned MoE model depends on the context for skill inference (i.e., $\pi(\theta|c) =\sum_{o\in O} \pi(o|c) \pi(\theta|o,c)$, where $o$ represents skills/components). Furthermore, the downstream task provides explicit context for the algorithm, which makes the method less general. Thus, the MoE method may not be deployed directly to the unsupervised skill discovery as far as we know; we can only receive the states from the environment and encourage diverse behaviors via some self-proposed objective (such as $r = \log(z|s)$ from DIAYN).

Second, the details of the method are quite different. As for maximizing state coverage, we propose portioned exploration to encourage local skill exploration, while the MoE algorithm uses policy entropy (i.e., $H(\pi(\theta|o,c))$), which is common in RL research. As for distinguishing between skills, we propose the clustering-based technique. In contrast, the MoE algorithm does not introduce the technique to explicitly encourage skill/component diversity as we know. Given the context-conditioned task (e.g., the context $c$ defines the target position in table tennis) and the context-conditioned extrinsic reward function $r(s,a,c)$, the components/skills can naturally derive distinguishable behaviors under the guidance of the context. Imagine a simple case, we train multiple skill networks, and each skill is trained to maximize its own context-based reward function (i.e., $\pi_i^* = \arg\max \mathbb{E} _{\pi_i}[\sum _{t=0}^\infty r(s,a,c_i)]$), the trained skill will obtain distinguishable behaviors finally (e.g., different skill plays table tennis towards different target positions).

\begin{table}[t]
\centering
\caption{Result comparison of MoE methods.}
\vspace{0.5em}
\label{table:numerical_result}
\small
\begin{tabular}{lccc}
\toprule
{\textbf{Task}}    & {\textbf{CeSD+MoE (Finetune skill)}} & { \textbf{CeSD+MoE (Freeze skill)}} & { \textbf{CeSD}} \\\hline
{ walker\_stand}    & { 341 ± 16}                          & { 339 ± 48}                        & { \textbf{960} ± 3}       \\
{ walker\_run}      & { 75 ± 4}                            & { 71 ± 8}                          & { \textbf{337} ± 19}      \\
{ walker\_walk}     & { 157 ± 9}                           & { 159 ± 13}                        & { \textbf{834} ± 34}      \\
{ walker\_flip}     & { 197 ± 8}                           & {200 ± 13}                        & { \textbf{541} ± 17}      \\
{ quadruped\_stand} & { 627 ± 203}                         & { 532 ± 101}                       & { \textbf{919} ± 11}      \\
{ quadruped\_jump}  & { 480 ± 147}                         & { 361 ± 151}                       & { \textbf{755} ± 14}      \\
{ quadruped\_run}   & { 327 ± 107}                         & { 297 ± 60}                        & { \textbf{586} ± 25}      \\
{ quadruped\_walk}  & { 295 ± 158}                         & { 255 ± 70}                        & { \textbf{889} ± 23}    
\\\bottomrule
\end{tabular}
\end{table}

\subsection{Calculation of Eq.~\eqref{eq:reg_r}}

The size of $|\mathbb{S}^{\rm pe}_i-\mathbb{S}^{\rm clu}_i|$ is easy to calculate since the two state-sets are mostly overlapped. In clustering, for state $s\in \mathbb{S}^{\rm pe}_i$ collected by the skill policy $\pi_i$, (\romannumeral1) if $s$ is collected in the previous rounds, we have the cluster label unchanged (i.e., $(s, a, s')\in\mathbb{S}^{\rm clu}_i$) since the Sinkhorn-Knopp cluster algorithm will keep the cluster-index of existing states fixed; and (\romannumeral2) if $(s, a, s')$ is the newly collected one in the current round, it may be assigned to cluster $i$ or other clusters (e.g., $j$) according to $\{f(s)^\top c_j\} _{j\in[n]}$. Then we use $r=1/(|\mathbb{S}^{\rm pe}_i-\mathbb{S}^{\rm clu} _i|+\lambda)$ as the rewards to force $\pi_i$ to reduce the visitation probability of states lied in clusters of other skills. In implementation, we give each transition $(s,a,s')$ two skill labels (i.e., $z^{\rm pe}$ and $z^{\rm clu}$). Specifically, $z^{\rm pe}$ signifies the transition $(s,a,s')$ is collected by which skill policy in exploration, and $z^{\rm clu}$ is determined by the clustering index of Sinkhorn-Knopp algorithm.

\subsection{Non-overlapping Property}

The non-overlapping property of skills is not a hard constraint in our method but a soft one with a tolerance value. In Sec. 3.3, we define a desired policy $\hat{\pi}_i$ based on the skill policy $\pi_i$, where $d^{\hat{\pi}_i}(s)=0$ for overlapping states between clusters. Then our constraint for regularizing skill $\pi_i$ is defined as $\mathcal{L} _{\rm reg}(\pi_i)=D _{\rm TV}(d^{\hat{\pi}_i} \| d^{\pi_i})$. In practice, we adopt a heuristic intrinsic reward to prevent the policy $\pi_i$ from visiting states in $\mathbb{S}^{\rm pe}_i-\mathbb{S}^{\rm clu} _i$, as $r^{\rm reg} _i=1/(|\mathbb{S}^{\rm pe} _i-\mathbb{S}^{\rm clu}_i|+\lambda)$, which we assume to make the TV-distance between state distributions bounded by $D _{\rm TV}\big(d^{\hat{\pi}_i}|d^{\pi_i}\big)\leq \delta$, where $\delta$ is a tolerance value. In our paper, Theorem 3.3 and Corollary 3.4 hold with such a tolerance value. Some special cases exist in which each skill policy must visit some bottleneck states. In these cases, the regularization reward $r^{\rm reg}_i=1/(|\mathbb{S}^{\rm pe}_i-\mathbb{S}^{\rm clu}_i|+\lambda)\leq 1/(c+\lambda)$, where $c$ is the number of bottleneck states. The reward $r^{\rm reg}_i$ will become small if $c$ is very large, which can be alleviated by removing this constant or increasing the weight of $r^{\rm reg}_i$ in policy updating. 

\subsection{Pixel-based URLB}

According to Pixel-URLB \cite{rajeswar2023mastering}, which evaluates the unsupervised RL algorithms on pixel-based URLB, the performance in pixel-based URLB depends heavily on the basic RL algorithm. Specifically, according to Figure 1 of \citet{rajeswar2023mastering}, all unsupervised RL algorithms perform poorly when combined with a model-free method (e.g., DrQv2), while they perform much better when using a model-based algorithm (e.g., Dreamer) as the backbone. APT obtains the best performance in the challenging Quadruped domain compared to other methods. Following the official code of [1], we re-implement CeSD with the Dreamer backbone. We compare CeSD-Dreamer and APT-Dreamer in the following table. The result shows our method outperforms APT in the pixel-based domain on average.

\begin{table}[t]
\centering
\caption{Results comparison of Pixel-based URLB methods.}
\vspace{0.5em}
\label{table:pixel-urlb}
\small
\begin{tabular}{lcc}
\toprule
{ \textbf{Pixel-based Task}} & { \textbf{CeSD Dreamer}} & { \textbf{APT Dreamer}} \\\hline
{ quadruped\_jump}           & { \textbf{756.5} ± 60}            & { 584.6 ± 1}            \\
{ quadruped\_run}            & { \textbf{445.7} ± 23}            & { 428.2 ± 21}           \\
{ quadruped\_stand}          & { 864.9 ± 1}             & { \textbf{914.7} ± 7}            \\
{ quadruped\_walk}           & { \textbf{581.5} ± 129}           & { 473.7 ± 27}          
\\\bottomrule
\end{tabular}
\end{table}

\subsection{Discrete/Continuous Skill Space}

Although infinite skills (in continuous space) seem to be a better choice, infinite skills do not always lead to better performance than discrete ones. As shown in Fig. 3, DADS have a continuous skill space while the resulting state coverage is limited. As for the DMC tasks, the baseline methods, including APS, DADS, and CIC, also have a continuous skill space. Actually, learning infinite skills with diverse and meaningful behaviors is desirable, while it can be difficult for existing skill discovery methods. In our method, since we adopt partition exploration based on Sinkhorn-Knopp clustering, the cluster number is required to be finite to partition the state space, and each state should be assigned to a specific cluster.

\subsection{Additional Comparison to Re-Implemented Baselines}
\label{app:add-compare}

The recently proposed Metra \cite{metra} uses Wasserstein dependency to measure (WDM) between states and skills, i.e., $I_W(S,Z)$, for skill discovery. Metra also contains experiments in URLB benchmark while it only reports the skill policy’s coverage (see Fig. 5 of \citet{metra}), and the downstream tasks are specifically designed to reach a target goal (see Appendix F.1 of \citet{metra}) rather than diverse task adaptation considered in our paper. As a result, we use the official Metra code and carefully modify the goal adaptation process to evaluate the adaptation of various DMC tasks. We also add new baselines, including LSD \cite{LSD-2022} and CSD \cite{USD-2023}. Since LSD/CSD are evaluated on different benchmarks in their original papers, we have tried our best to re-implement LSD/CSD in URLB tasks based on the official code. A comparison of the results is given in the following table. We find out that the method obtains competitive performance compared to CSD and Metra in the \emph{Walker} domain and significantly outperforms other methods in the \emph{Quadruped} domain.

\begin{table}[h!]
\centering
\caption{Results comparison to re-implemented baselines.}
\vspace{0.1em}
\small
\label{table:add-urlb}
\begin{tabular}{lcccc}
\\\toprule
\textbf{Task}                           & \textbf{LSD}                     & \textbf{CSD}                     & \textbf{Metra}                   & \textbf{CeSD}                   \\\hline
{ walker\_flip}     & { 223 ± 6}   & { \textbf{602} ± 11}  & { 589 ± 75}  & { 541 ± 17} \\
{ walker\_run}      & { 130 ± 22}  & { \textbf{457} ± 50}  & { 361 ± 45}  & { 337 ± 19} \\
{ walker\_stand}    & { 837 ± 3}   & { 942 ± 8}   & { 943 ± 13}  & { \textbf{960} ± 3}  \\
{ walker\_walk}     & { 323 ± 75}  & { 802 ± 85}  & { \textbf{850} ± 63}  & { \textbf{834} ± 34} \\
{ quadruped\_jump}  & { 247 ± 54}  & { 520 ± 80}  & { 224 ± 17}  & { \textbf{775} ± 14} \\
{ quadruped\_run}   & { 270 ± 55}  & { 329 ± 62}  & { 196 ± 34}  & { \textbf{586} ± 25} \\
{ quadruped\_stand} & { 426 ± 131} & { 425 ± 120} & { 324 ± 173} & { \textbf{919} ± 11} \\
{ quadruped\_walk}  & { 256 ± 83}  & { 353 ± 142} & { 190 ± 44}  & { \textbf{889} ± 23}
\\\bottomrule
\end{tabular}
\end{table}

\end{document}